%% file: main.tex
\documentclass{article} 
\usepackage{iclr2027_conference,times}

\input{math_commands.tex}

\usepackage{url}
\usepackage{graphicx}
\usepackage{booktabs}

\title{Optimizing Stochastic Policy Geometry for Few-Step Flow Reinforcement Learning}

\author{%
\makebox[\dimexpr\textwidth-2\tabcolsep\relax][c]{%
\begin{tabular}[t]{@{}c@{}}
Zhuguanyu Wu\textsuperscript{1,2}\thanks{Equal Contribution.}\quad
Congjia Chen\textsuperscript{1}\footnotemark[1]\quad
Ruihao Gong\textsuperscript{1,2}\thanks{Corresponding author.}\quad
Yang Yong\textsuperscript{2}\\[2pt]
Shiqiao Gu\textsuperscript{2}\quad
Yi Xu\textsuperscript{1}\quad
Xianglong Liu\textsuperscript{1}\\[6pt]
\normalfont\textsuperscript{1}Beihang University\quad
\textsuperscript{2}Sensetime Research
\end{tabular}%
}%
}

\newtheorem{proposition}{Proposition}
\newtheorem{theorem}{Theorem}

\usepackage{algorithm}
\usepackage{algpseudocode}
\usepackage{hyperref}
\def\GeoFSRLArxiv{1}

\ifdefined\GeoFSRLArxiv
  \iclrfinalcopy
  \hypersetup{
    hidelinks,
    pdftitle={Optimizing Stochastic Policy Geometry for Few-Step Flow Reinforcement Learning},
    pdfauthor={Zhuguanyu Wu, Congjia Chen, Ruihao Gong, Yang Yong, Shiqiao Gu, Yi Xu, Xianglong Liu}
  }
\fi
\begin{document}

\maketitle
\ifdefined\GeoFSRLArxiv
  \fancyhead{}
  \renewcommand{\headrulewidth}{0pt}
\fi

\begin{abstract}
Few-step flow models generate samples through a handful of large denoising transitions. Applying GRPO to such models requires introducing stochasticity into the denoising transitions, and when a single transition spans a wide range of noise levels, the choice of the resulting stochastic policy becomes a major factor in optimization. Recent constructions constrain the policy to preserve the target flow coefficients, but this constraint admits a family of policies and offers no criterion for choosing among them. We introduce Geo-FSRL (Geometry-Optimized Few-Step Reinforcement Learning), built on a selection principle: at a fixed conditioning state and timestep pair, choose the admissible policy that minimizes the conditional KL divergence induced by any given nonzero velocity-output update. Within a coefficient-preserving family of isotropic Gaussian transitions, this criterion reduces to minimizing a single scalar geometry factor that determines the conditional KL for every velocity-output update. The same factor governs velocity-space Fisher information and conditional transition log-likelihood-ratio variance. We derive the unique closed-form optimizer within this family, shared across all nonzero update magnitudes and directions, with transition coefficients determined by both the source and target noise levels. Because the minimized geometry factor still varies across timesteps, we further adopt gradient-scale matching from prior work and derive closed-form policy-gradient and KL weights that normalize explicit score scales and reference-KL curvature. Geo-FSRL trains on the original few-step schedule and retains the deterministic solver at inference. Our method achieves higher final reward scores than Flow-GRPO and Flow-CPS on both image generation and video generation models under the 4-step setting. Code is available at \url{https://github.com/ModelTC/LightX2V/tree/main/lightx2v_rl}
\end{abstract}

\section{Introduction}

Reinforcement learning (RL) is increasingly used to post-train diffusion and flow models for visual generation, improving visual quality and prompt alignment through reward feedback \citep{black2024training,xue2025dancegrpo,liu2025flow}. A common formulation treats denoising as a sequence of decisions, with intermediate latents as states and rewards assigned to generated samples. For deterministic flow samplers, Flow-GRPO and DanceGRPO introduce stochastic transitions to support exploration and likelihood-ratio optimization \citep{liu2025flow,xue2025dancegrpo}. Meanwhile, distillation compresses generation into a few denoising steps \citep{yin2024one,lightx2v,wu2026sgmd,fan2026phaseddmd}, motivating RL methods that retain this efficient inference schedule. Each transition then spans a wide interval of noise levels, making the finite-step stochastic policy an important design choice.

Large transitions expose differences between stochastic constructions. The standard Euler--Maruyama transition used by Flow-GRPO generally does not satisfy Flow-CPS's coefficient-preserving constraint \citep{liu2025flow,wang2025coefficients}. Flow-CPS enforces this constraint by constructing transitions directly at the target noise level \citep{wang2025coefficients}. We adopt the same constraint, which fixes the sum of the squared noise coefficients while allowing different splits between predicted noise carried over from the current state and fresh Gaussian noise. Flow-CPS specifies this split through a single normalized stochasticity parameter shared across timesteps and tuned empirically. The constraint thus defines an admissible family but leaves a policy-selection question: \emph{which transition should serve as the training policy?}

To answer this question, we introduce Geo-FSRL (Geometry-Optimized Few-Step Reinforcement Learning), motivated by the distinction between the distributions used for policy optimization and the velocity predictions used for generation. GRPO moderates updates through likelihood-ratio clipping and reference-policy regularization, both defined on stochastic transition distributions, whereas deterministic inference depends on velocity predictions. Within the coefficient-preserving isotropic Gaussian family, the same velocity-output update can induce different changes in the transition distribution depending on the noise split. We therefore propose the following selection principle: \emph{at a fixed conditioning state and timestep pair, choose the admissible policy that minimizes the conditional KL divergence induced by any given nonzero velocity-output update}.

To characterize this criterion, we identify a scalar geometry factor that determines the conditional KL for a prescribed velocity-output update. Its square quantifies \emph{policy sensitivity}, measuring how strongly the stochastic transition distribution responds to changes in the velocity prediction. The same factor governs velocity-space Fisher information and conditional log-likelihood-ratio variance under the old policy. This characterization holds for finite updates and yields the same minimizing coefficients for all nonzero update magnitudes and directions, before reward-based optimization determines which updates to take.

The resulting problem has a unique closed-form solution, with coefficients determined by both source and target noise levels. Under rectified-flow parameterization, it recovers the one-step SDE-DPM-Solver++ transition \citep{lu2022dpmsolverplusplus}, providing a geometric optimality characterization of an existing sampling rule. At a fixed target noise level, the minimized geometry factor increases with the transition span, so larger transitions remain more sensitive to the same velocity update even under the optimal split. To account for the remaining variation across timesteps, we follow the gradient-scale matching principle of \citet{meituanlongcatteam2025longcatvideotechnicalreport} to derive policy-gradient and KL weights that normalize explicit score scales and reference-KL curvature.

We evaluate Geo-FSRL on four-step distilled Qwen-Image \citep{wu2025qwen,lightx2v} and Wan2.1 \citep{wan2025wan} models for image and video generation. It attains higher final reward scores than Flow-GRPO and Flow-CPS in the main comparisons while retaining the original deterministic inference schedule.

Our contributions are summarized as follows:
\begin{itemize}
    \item We propose selecting stochastic policies by minimizing conditional KL for a given nonzero velocity-output update, reducing the problem to scalar optimization within a coefficient-preserving isotropic Gaussian family.
    \item We derive the unique closed-form optimizer within this family, shared across all nonzero update magnitudes and directions, and characterize how the minimum geometry factor depends on the transition span at a fixed target noise level.
    \item We develop Geo-FSRL by combining the selected transition with closed-form policy-gradient and KL weights from gradient-scale matching, and demonstrate higher final reward scores than Flow-GRPO and Flow-CPS on four-step image and video models.
\end{itemize}

\section{Preliminaries}

\subsection{Flow Matching and GRPO}

We consider rectified-flow models defined by the linear probability path
\begin{equation}
    \bm{x}_{t}
    =
    (1-t)\bm{x}_{0}
    +
    t\bm{\epsilon},
    \qquad
    \bm{\epsilon}\sim\mathcal{N}(\bm{0},\bm{I}),
    \qquad
    t\in[0,1].
\end{equation}
where $\bm{x}_{0}$ is a clean sample and the model
$\bm{v}_{\theta}(\bm{x}_{t},t)$ is trained to predict the velocity
$\bm{v}=\bm{\epsilon}-\bm{x}_{0}$. Given a prediction, the clean sample
and the noise are estimated as
\begin{equation}
    \hat{\bm{x}}_{0}
    =
    \bm{x}_{t}-t\bm{v}_{\theta}(\bm{x}_{t},t),
    \qquad
    \hat{\bm{\epsilon}}
    =
    \bm{x}_{t}+(1-t)\bm{v}_{\theta}(\bm{x}_{t},t).
    \label{eq:reconstruction_estimates}
\end{equation}
Sampling follows a decreasing timestep schedule
$\mathcal{T}=\{t_{0}=1>t_{1}>\cdots>t_{K}=0\}$ with $K$ denoising
steps; for a transition between timesteps $t>s$ we
write $\Delta_{t,s}=t-s>0$ for its span.

For online RL, denoising is cast as a sequential decision process
\citep{black2024training}: for the transition from $t$ to $s$, the state
is the prompt $\bm{c}$ together with the current latent $\bm{x}_{t}$, and
the next latent $\bm{x}_{s}$ is an action drawn from a stochastic policy
$q_{\theta}(\bm{x}_{s}\mid\bm{x}_{t},\bm{c})$. GRPO \citep{shao2024deepseekmath, liu2025flow} optimizes $\theta$ without a learned value function. For each prompt, a group of $M$ trajectories is
sampled from the current policy $\theta_{\mathrm{old}}$, which is frozen
for one iteration, and the terminal rewards are converted into
group-relative advantages $A$.
The reward-wise normalization and multi-reward combination used in our
experiments are specified in Appendix~\ref{app:experimental_details}.
With the policy ratio
$r_{t,s}(\theta)
=q_{\theta}(\bm{x}_{s}\mid\bm{x}_{t},\bm{c})
/q_{\theta_{\mathrm{old}}}(\bm{x}_{s}\mid\bm{x}_{t},\bm{c})$,
a standard reference-regularized objective is
\begin{equation}
    \begin{aligned}
    \mathcal{L}_{\mathrm{GRPO}}
    ={}&
    -\mathbb{E}\left[
        \min\left(
            r_{t,s}(\theta)A,\,
            \operatorname{clip}\left(
                r_{t,s}(\theta),
                1-\varepsilon_{\mathrm{clip}},
                1+\varepsilon_{\mathrm{clip}}
            \right)A
        \right)
    \right]
    \\
    &+
    \lambda_{\mathrm{KL}}\,
    \mathbb{E}\left[
        D_{\mathrm{KL}}\left(
            q_{\theta}(\cdot\mid\bm{x}_{t},\bm{c})
            \,\Vert\,
            q_{\mathrm{ref}}(\cdot\mid\bm{x}_{t},\bm{c})
        \right)
    \right],
    \end{aligned}
    \label{eq:grpo_objective}
\end{equation}
where expectations run over prompts and group trajectories, and
$q_{\mathrm{ref}}$ is a frozen reference policy, typically the initial
model. The sampling policy $\theta_{\mathrm{old}}$ defines the ratio and
is refreshed every iteration; $q_{\mathrm{ref}}$ anchors training.
Section~\ref{sec:method} analyzes the divergence between
$q_{\theta_{\mathrm{old}}}$ and $q_{\theta}$ induced by one update,
while the reference KL in Eq.~\ref{eq:grpo_objective} regularizes
training.

\subsection{Stochastic Transitions for Few-Step Flow RL}
\label{sec:stochastic_transitions}

Flow matching ordinarily generates samples through a deterministic ODE.
Substituting Eq.~\ref{eq:reconstruction_estimates}, the Euler update
from $t$ to $s$ can be rewritten as
$\bm{x}_{t}-\Delta_{t,s}\bm{v}_{\theta}(\bm{x}_{t},t)
=(1-s)\hat{\bm{x}}_{0}+s\hat{\bm{\epsilon}}$
(Appendix~\ref{sec:app1}). Stochastic transitions used in flow RL inject
Gaussian noise into this update, and those considered in this paper can
all be written in the common form
\begin{equation}
    \bm{x}_{s}
    =
    (1-s)\hat{\bm{x}}_{0}
    +
    \beta_{t,s}\hat{\bm{\epsilon}}
    +
    \alpha_{t,s}\bm{\xi},
    \qquad
    \bm{\xi}\sim\mathcal{N}(\bm{0},\bm{I}),
    \label{eq:common_transition_form}
\end{equation}
with $\bm{\xi}$ sampled independently; $\beta_{t,s}$ scales the predicted
noise and $\alpha_{t,s}$ the newly injected noise.
Appendix~\ref{sec:app1} compares these coefficients in
Table~\ref{tab:transition_coefficients} and gives the derivations and
implementation variants.

\paragraph{Flow-GRPO.}
Flow-GRPO \citep{liu2025flow} constructs a continuous-time SDE whose
marginals coincide with those of the flow ODE and discretizes each
transition with Euler--Maruyama, using
$\sigma_t=a\sqrt{t/(1-t)}$ with a global noise level $a>0$.
For the standard uncapped construction at interior timesteps, the
clean-sample coefficient remains $1-s$, while
$\alpha_{t,s}^2+\beta_{t,s}^2-s^2$ is strictly positive and grows
quadratically with the span at fixed $t$ and $\sigma_t$
(Table~\ref{tab:transition_coefficients}). These are algebraic properties
of the transition coefficients; they do not by themselves determine the
sample marginals under learned predictions or establish a performance
disadvantage. Appendix~\ref{sec:app1} treats the schedule endpoint and
noise cap.

\paragraph{Flow-CPS.}
Flow-CPS \citep{wang2025coefficients} instead constructs the transition
directly at the target noise level, parameterizing the noise split by a
global normalized stochasticity parameter $\phi\in(0,1]$, corresponding
to an angle $\pi\phi/2$:
\begin{equation}
    \alpha_{t,s}=s\sin(\pi\phi/2),
    \qquad \beta_{t,s}=s\cos(\pi\phi/2).
    \label{eq:flow_cps_coefficients}
\end{equation}
These coefficients satisfy
$\alpha_{t,s}^{2}+\beta_{t,s}^{2}=s^{2}$ for every span, with the ODE
Euler update at the deterministic boundary $\phi=0$.
The constraint leaves a one-parameter family of
$(\alpha_{t,s},\beta_{t,s})$: every admissible $\phi$ preserves the
coefficients, and Flow-CPS selects $\phi$ empirically.
Section~\ref{sec:method} optimizes this choice.

\section{Geo-FSRL}
\label{sec:method}

Geo-FSRL selects a stochastic training policy by minimizing the conditional
KL divergence induced by a prescribed nonzero velocity-output update.
We first formulate this criterion, then derive an equivalent scalar
optimization and its closed-form solution. Matched weights subsequently
normalize explicit policy-gradient and reference-regularization scales
without changing the selected policy's likelihood ratios.

\subsection{Admissible Stochastic Policies}
\label{sec:policy_family}

For a transition with $0<s<t\leq1$, we use the common form in
Eq.~\ref{eq:common_transition_form} and impose
\begin{equation}
    \alpha_{t,s}>0,\qquad 0\leq\beta_{t,s}<s,\qquad
    \alpha_{t,s}^{2}+\beta_{t,s}^{2}=s^{2}.
    \label{eq:coefficient_constraint}
\end{equation}
Thus $\beta_{t,s}$ indexes the policies, with
$\alpha_{t,s}=\sqrt{s^{2}-\beta_{t,s}^{2}}$.
Fixing the total noise coefficient prevents trivially reducing
conditional KL by injecting unbounded noise. Under oracle reconstruction,
the predicted and fresh noise are independent standard Gaussians,
reproducing the linear path's noise coefficient at $s$.
Our conditional analysis requires only Gaussian fresh noise $\bm\xi$;
the learned noise prediction need not be standard Gaussian.

Substituting the reconstruction estimates into the transition gives
\begin{equation}
    \begin{aligned}
    q_\beta(\cdot\mid\bm v)
        &=\mathcal N\!\left(\bm\mu_\beta(\bm v),\alpha_{t,s}^{2}\bm I\right),\\
    \bm\mu_\beta(\bm v)
        &=(1-s+\beta_{t,s})\bm x_t+\kappa_{t,s}\bm v,
        \qquad \kappa_{t,s}=-t(1-s)+(1-t)\beta_{t,s}.
    \end{aligned}
    \label{eq:policy_mean_velocity_form}
\end{equation}
Fix a state $(\bm x_t,\bm c)$ and timestep pair $(t,s)$.
Let $\bm v_0$ be the old velocity prediction and
$\Delta\bm v=\bm v_\theta-\bm v_0\in\mathbb R^D$ an output update,
where $D$ is the latent dimension. Each candidate $\beta$ keeps its
coefficients and covariance fixed between old and updated policies;
across candidates, we prescribe the same $\Delta\bm v$ independently
of the optimizer.

\subsection{Policy Selection by Conditional KL}
\label{sec:kl_selection}
\label{sec:policy_sensitivity}

\paragraph{A common measure of model change.}
The velocity output provides a common measure of model change across
stochastic constructions. At the fixed state, the deterministic Euler
update changes by $-(t-s)\Delta\bm v$, independently of the training
noise split. Yet the same $\Delta\bm v$ can change stochastic transition
distributions differently. Coefficient selection therefore determines how
a model-output change is expressed in its training policy.

We seek to limit this distribution change for a prescribed output update.
Within our Gaussian family, smaller conditional KL implies lower
conditional log-ratio variance under each candidate's old policy and,
under the threshold condition in
Section~\ref{sec:geometry_matched_objective}, a lower probability of
leaving the clipping window. Loss reweighting can normalize explicit gradient
and reference-penalty scales but leaves likelihood ratios for a prescribed
output update unchanged, so these differences persist after scale matching.

\paragraph{Policy-selection criterion.}
For each candidate, we measure the distribution change induced by an
output update using the conditional KL divergence between its updated
and old policies:
\begin{equation}
    K_\beta(\Delta\bm v)
    =D_{\mathrm{KL}}\!\left(
        q_\beta(\cdot\mid\bm v_0+\Delta\bm v)
        \,\Vert\,q_\beta(\cdot\mid\bm v_0)\right).
    \label{eq:kl_cost}
\end{equation}
Our selection principle is to \emph{choose the admissible policy that
minimizes the conditional KL divergence induced by the same prescribed
nonzero velocity-output update}. At fixed $(\bm x_t,\bm c,t,s)$ and
$\Delta\bm v\neq\bm0$, this defines the optimization problem
\begin{equation}
    \min_{0\leq\beta_{t,s}<s}\;K_\beta(\Delta\bm v).
    \label{eq:kl_selection_problem}
\end{equation}
The criterion selects a policy for a prescribed update; reward feedback
determines which output updates are useful. We next solve this conditional
KL problem within the admissible family.

\subsection{Geometry Factor and Optimal Transition}
\label{sec:geometry_optimal_policy}
\label{sec:policy_geometry}

\paragraph{A scalar geometry factor $G$.}
To analyze Eq.~\ref{eq:kl_selection_problem}, note that an output update
shifts the policy mean by $\kappa_{t,s}\Delta\bm v$ while leaving its
covariance fixed. We define
\begin{equation}
    G_{t,s}(\beta_{t,s})
    =\frac{|\kappa_{t,s}|}{\alpha_{t,s}}
    =\frac{t(1-s)-(1-t)\beta_{t,s}}
          {\sqrt{s^{2}-\beta_{t,s}^{2}}}.
    \label{eq:geometry_factor}
\end{equation}
This \emph{geometry factor} is the standardized policy-mean displacement
per unit velocity change:
$\|\Delta\bm\mu\|_2/\alpha_{t,s}=G_{t,s}\|\Delta\bm v\|_2$.
It depends only on the timestep pair and coefficient choice, and is
strictly positive on the admissible family.

\begin{proposition}[Equivalence to the conditional KL criterion]
\label{prop:policy_sensitivity}
At a fixed state and timestep pair, every admissible coefficient choice
and every finite output update satisfy
\begin{equation}
    K_\beta(\Delta\bm v)
        =\frac12G_{t,s}^{2}\|\Delta\bm v\|_2^2.
    \label{eq:policy_kl_geometry}
\end{equation}
For any prescribed $\Delta\bm v\neq\bm0$, the selection problem in
Eq.~\ref{eq:kl_selection_problem} is therefore equivalent to minimizing
$G_{t,s}$:
\begin{equation}
    \operatorname*{arg\,min}_{0\leq\beta_{t,s}<s} K_\beta(\Delta\bm v)
    =\operatorname*{arg\,min}_{0\leq\beta_{t,s}<s} G_{t,s}^2(\beta_{t,s})
    =\operatorname*{arg\,min}_{0\leq\beta_{t,s}<s} G_{t,s}(\beta_{t,s}).
    \label{eq:geometry_optimization_problem}
\end{equation}
In particular, its minimizing coefficients are independent of the
prescribed update's nonzero magnitude and direction.
\end{proposition}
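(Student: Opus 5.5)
The plan is to compute the conditional KL directly from the Gaussian form in Eq.~\ref{eq:policy_mean_velocity_form} and then reduce the minimization to one over the scalar $G_{t,s}$.

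\textbf{Step 1: the two policies share a covariance.} For a fixed admissible $\beta_{t,s}$, the coefficients $(1-s+\beta_{t,s})$, $\kappa_{t,s}$ and $\alpha_{t,s}$ depend only on $(t,s,\beta_{t,s})$. They do not depend on $\bm v$. Hence the updated policy $q_\beta(\cdot\mid\bm v_0+\Delta\bm v)$ and the old policy $q_\beta(\cdot\mid\bm v_0)$ are both isotropic Gaussians with the common covariance $\alpha_{t,s}^2\bm I$. Their means differ by
\[
\Delta\bm\mu=\kappa_{t,s}\Delta\bm v,
\]
because the $\bm x_t$ term cancels at the fixed state.

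\textbf{Step 2: evaluate the KL.} I will use the standard closed form for the KL between Gaussians with equal covariance $\Sigma$,
\[
D_{\mathrm{KL}}=\tfrac12\,\Delta\bm\mu^{\top}\Sigma^{-1}\Delta\bm\mu .
\]
The trace and log-determinant terms vanish because the covariances coincide. With $\Sigma=\alpha_{t,s}^2\bm I$ this gives
\[
K_\beta(\Delta\bm v)=\frac{\kappa_{t,s}^2}{2\alpha_{t,s}^2}\,\|\Delta\bm v\|_2^2 .
\]
The identity holds exactly for every finite update, not only to second order. It requires $\alpha_{t,s}>0$, which the admissible set in Eq.~\ref{eq:coefficient_constraint} guarantees.

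\textbf{Step 3: identify $G_{t,s}$.} Comparing with Eq.~\ref{eq:geometry_factor}, we have $\kappa_{t,s}^2/\alpha_{t,s}^2=G_{t,s}^2$, which yields Eq.~\ref{eq:policy_kl_geometry}. I will also record why the absolute-value form of $G_{t,s}$ is valid: on the admissible set, $(1-t)\beta_{t,s}<(1-t)s\le t(1-s)$, since $s<t$. Therefore $-\kappa_{t,s}>0$, $G_{t,s}>0$, and the displayed expression holds without the absolute value.

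\textbf{Step 4: the argmin equivalence.} For fixed $\Delta\bm v\neq\bm0$, $K_\beta$ equals the positive constant $\tfrac12\|\Delta\bm v\|_2^2$ times $G_{t,s}^2(\beta_{t,s})$. This constant does not depend on $\beta_{t,s}$, so the two problems have the same argmin set. Since $G_{t,s}>0$ and squaring is strictly increasing on $(0,\infty)$, minimizing $G_{t,s}^2$ is equivalent to minimizing $G_{t,s}$. The constant $\tfrac12\|\Delta\bm v\|_2^2$ has cancelled, so the argmin is independent of the magnitude and direction of the update. This gives the final claim.

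The main obstacle is not computational but a matter of stating things carefully. First, the argmin equivalence is between sets, and existence of a minimizer is not asserted here, since the admissible interval $[0,s)$ is not closed. Second, I must stress that the covariance is identical between the old and updated policies for each candidate. This holds because the coefficients do not depend on $\theta$ once $\beta_{t,s}$ is fixed, and it is exactly why the KL is exact rather than a local Fisher approximation.
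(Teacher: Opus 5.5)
Your proposal is correct and follows the paper's proof: the common covariance $\alpha_{t,s}^2\bm I$ and mean shift $\kappa_{t,s}\Delta\bm v$ give the exact KL $\kappa_{t,s}^2\|\Delta\bm v\|_2^2/(2\alpha_{t,s}^2)$, and positivity of $G_{t,s}$ together with the $\beta$-independent factor $\tfrac12\|\Delta\bm v\|_2^2$ makes the three argmin sets coincide. One small slip to fix: the family allows $t=1$, and there the first strict inequality in your chain $(1-t)\beta_{t,s}<(1-t)s\le t(1-s)$ is false, so write it as $(1-t)\beta_{t,s}\le(1-t)s<t(1-s)$, with strictness coming from $t(1-s)-(1-t)s=t-s>0$, as the paper does.
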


\noindent\emph{Proof.}
Within each candidate, the old and updated policies have equal covariance
$\alpha_{t,s}^2\bm I$ and mean difference $\kappa_{t,s}\Delta\bm v$.
The Gaussian KL formula gives
$K_\beta=\|\kappa_{t,s}\Delta\bm v\|_2^2/(2\alpha_{t,s}^2)$,
which is Eq.~\ref{eq:policy_kl_geometry}. Since
$\|\Delta\bm v\|_2^2/2$ is a positive constant across candidates and
$G_{t,s}>0$, the three sets of minimizers coincide. \hfill$\square$

The equivalence is exact for finite updates; its direction independence
follows from isotropic covariance and scalar velocity coupling
(Appendix~\ref{app:policy_geometry}).

\paragraph{$G$ can be minimized in closed form.}
The equivalence above reduces policy selection to minimizing $G_{t,s}$
over the admissible noise split. The following theorem gives the optimal
coefficients and the corresponding minimum geometry factor.
\begin{theorem}[Geometry-optimal transition]
\label{thm:optimal_transition}
For every $0<s<t\leq1$ and prescribed $\Delta\bm v\neq\bm0$,
the selection problem in Eq.~\ref{eq:kl_selection_problem} has the
unique solution
\begin{equation}
    \beta_{t,s}^{\star}=\frac{s^2(1-t)}{t(1-s)},
    \qquad \alpha_{t,s}^{\star}=\sqrt{s^2-(\beta_{t,s}^{\star})^2},
    \qquad G_{t,s}^{\star}=\frac{\sqrt{(t-s)(t+s-2ts)}}{s}.
    \label{eq:optimal_transition_coefficients}
\end{equation}
\end{theorem}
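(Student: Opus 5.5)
By Proposition~\ref{prop:policy_sensitivity}, for any prescribed $\Delta\bm v\neq\bm0$ the selection problem in Eq.~\ref{eq:kl_selection_problem} has the same minimizers as $G_{t,s}^2(\beta)$ over $0\leq\beta<s$. The proof therefore reduces to a one-variable calculus problem, and it is independent of $\Delta\bm v$.

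\textbf{Step 1: set up the objective.}
Write $a=t(1-s)$ and $b=1-t$. Then
\[
g(\beta)=G_{t,s}^2(\beta)=\frac{(a-b\beta)^2}{s^2-\beta^2}.
\]
We first record that $a-b\beta>0$ on $[0,s)$. Indeed, $a-bs=t-ts-s+ts=t-s>0$, and $a-b\beta$ is nonincreasing in $\beta$, so its value on $[0,s)$ is at least $t-s$. Hence $G_{t,s}$ is positive and smooth there, which is consistent with the paper's remark that $G$ is strictly positive on the admissible family.

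\textbf{Step 2: locate the critical point.}
Differentiating gives
\[
g'(\beta)=\frac{2(a-b\beta)\,N(\beta)}{(s^2-\beta^2)^2},
\]
where
\[
N(\beta)=-b(s^2-\beta^2)+\beta(a-b\beta)=a\beta-bs^2 .
\]
The quadratic terms cancel, so $N$ is linear in $\beta$. Since $a>0$, the sign of $g'$ equals the sign of $a\beta-bs^2$. Thus $g$ is strictly decreasing for $\beta<\beta^\star:=bs^2/a$ and strictly increasing for $\beta>\beta^\star$.

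Substituting $a$ and $b$ gives $\beta^\star=s^2(1-t)/(t(1-s))$, the claimed value. It remains to check that $\beta^\star$ is admissible.
\begin{itemize}
\item $\beta^\star\geq0$ holds because $t\leq1$. (When $t=1$ we get $\beta^\star=0$, and then $g$ is constant-numerator and increasing, so the boundary point is the minimizer.)
\item $\beta^\star<s$ is equivalent to $s(1-t)<t(1-s)$, i.e.\ $s<t$, which holds.
\end{itemize}
Strict monotonicity on each side of $\beta^\star$ gives uniqueness of the minimizer. As a side note, $g\to\infty$ as $\beta\to s^-$, so no infimum escapes to the open endpoint. The coefficient $\alpha^\star=\sqrt{s^2-(\beta^\star)^2}$ then follows from the constraint in Eq.~\ref{eq:coefficient_constraint}.

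\textbf{Step 3: evaluate $G^\star$.}
This is the main computational obstacle, since direct substitution is messy. Use the critical-point relation $a\beta^\star=bs^2$.

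For the numerator,
\[
a-b\beta^\star=a-\frac{b^2s^2}{a}=\frac{a^2-b^2s^2}{a}.
\]
For the denominator,
\[
s^2-(\beta^\star)^2=s^2-\frac{b^2s^4}{a^2}=\frac{s^2(a^2-b^2s^2)}{a^2}.
\]
Dividing,
\[
g(\beta^\star)=\frac{(a^2-b^2s^2)^2/a^2}{s^2(a^2-b^2s^2)/a^2}=\frac{a^2-b^2s^2}{s^2}.
\]
Finally, factor $a^2-b^2s^2=(a-bs)(a+bs)$. We have $a-bs=t-s$ and $a+bs=t-ts+s-ts=t+s-2ts$. Taking the positive square root yields
\[
G^\star_{t,s}=\frac{\sqrt{(t-s)(t+s-2ts)}}{s},
\]
as stated. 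The quantity under the root is positive because $t-s>0$ and $a+bs>0$.
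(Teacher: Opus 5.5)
Your proof is correct and follows the paper's argument essentially step for step: the same substitution $a=t(1-s)$, $b=1-t$, the same sign analysis of the linear factor $a\beta-bs^2$ (you differentiate $G^2$ where the paper differentiates $G$, which makes no difference), the same separate treatment of the boundary case $t=1$, and the same evaluation $(G^\star)^2=(a^2-b^2s^2)/s^2=(t-s)(t+s-2ts)/s^2$. One small wording fix: the sign of $g'$ agrees with that of $a\beta-bs^2$ because $a-b\beta>0$ (your Step 1) and the denominator is positive, not because $a>0$.
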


See Appendix~\ref{app:optimal_transition} for the proof.
In Flow-CPS's parameterization,
$\cos(\pi\phi_{t,s}^{\star}/2)=s(1-t)/(t(1-s))$,
so each timestep pair determines its own noise split. Within the
admissible family, this rule replaces empirical selection of a global
$\phi$ with transition-wise coefficients computed directly from $(t,s)$.
At $t=1$, $\beta_{1,s}^{\star}=0$ and $\alpha_{1,s}^{\star}=s$,
allocating all target noise to fresh noise.
As $t\downarrow s$ at fixed $s\in(0,1)$,
$\beta_{t,s}^{\star}\to s$ and $\alpha_{t,s}^{\star}\to0$.
For $t<1$, the optimizer balances mean sensitivity and injected noise:
$\alpha^{\star}_{t,s}<s$, whereas maximum noise occurs at $\beta=0$.
Since $G_{t,s}(0)^2-(G^{\star}_{t,s})^2=(1-t)^2$,
these choices coincide only at $t=1$.
The coefficients recover the one-step SDE-DPM-Solver++ transition
\citep{lu2022dpmsolverplusplus}, equivalently the posterior-variance
ancestral transition under the matching marginal parameterization
(Appendix~\ref{app:dpm_correspondence}), now characterized as optimal
for the stated criterion.

\paragraph{$G$ governs policy sensitivity to velocity updates.}
For any nonzero $\Delta\bm v$,
\begin{equation}
    \frac{2K_\beta(\Delta\bm v)}{\|\Delta\bm v\|_2^2}
    =G_{t,s}^2,
    \qquad \mathcal I_{\bm v}=G_{t,s}^2\bm I,
    \label{eq:normalized_policy_sensitivity}
\end{equation}
where $\mathcal I_{\bm v}$ is the conditional Fisher information with
respect to the velocity output (Appendix~\ref{app:policy_geometry}).
Here $G_{t,s}^2$ quantifies \emph{policy sensitivity}: twice the
conditional KL per squared velocity-update norm.

GRPO evaluates output changes through transition likelihood ratios.
For a prescribed $\Delta\bm v$, consider
$r=q_\beta(\bm x_s\mid\bm v_0+\Delta\bm v)/
q_\beta(\bm x_s\mid\bm v_0)$ with
$\bm x_s\sim q_\beta(\cdot\mid\bm v_0)$. Its conditional log-ratio
variance satisfies
\begin{equation}
    \operatorname{Var}_{q_\beta(\cdot\mid\bm v_0)}(\log r)
    =G_{t,s}^{2}\|\Delta\bm v\|_2^{2}
    =2K_\beta(\Delta\bm v).
    \label{eq:conditional_log_ratio_variance}
\end{equation}
Thus the same coefficient choice minimizes both the conditional KL and
the log-ratio variance for the same output update, with actions sampled
from each candidate's old policy. Appendix~\ref{app:ratio_moments} derives
the full log-ratio distribution and ratio moments.
Section~\ref{sec:geometry_matched_objective} connects this log-ratio
characterization to the dimension-averaged ratio and clipping window used in training.

\paragraph{$G^{\star}$ increases with transition span at fixed $s$.}
Even after optimization, $G_{t,s}^{\star}$ increases strictly with $t$ at
fixed $s\in(0,1)$, from order $\sqrt{t-s}$ for small spans to
$G_{1,s}^{\star}=(1-s)/s$ at the pure-noise endpoint
(Appendix~\ref{app:optimal_transition}). Larger spans therefore increase
the minimum conditional KL and log-ratio variance induced by the same
velocity update under this fixed-target comparison.

\subsection{Geometry-Matched GRPO Training}
\label{sec:geometry_matched_objective}

\paragraph{Transition-wise weights.}
The geometry factor also enters the explicit score scale and
reference-KL curvature. We normalize these using the gradient-scale
matching principle of
\citet{meituanlongcatteam2025longcatvideotechnicalreport}.
At a fixed action, the velocity-output score is
\begin{equation}
    \nabla_{\bm v}\log q_\theta
    =\frac{\kappa_{t,s}^{\star}}{\alpha_{t,s}^{\star}}\bm\xi_\theta,
    \qquad
    \bm\xi_\theta=\frac{\bm x_s-\bm\mu_{\theta,t,s}}
                        {\alpha_{t,s}^{\star}}.
    \label{eq:policy_gradient_velocity}
\end{equation}
The explicit coefficient has magnitude $G_{t,s}^{\star}$. The current
and reference policies use the same coefficients and covariance, so their
conditional KL has velocity-space curvature $(G_{t,s}^{\star})^2\bm I$.
These factors vary across timestep pairs even after coefficient selection.
We therefore use
\begin{equation}
    w_{t,s}^{\mathrm{policy}}=\frac{c_{\mathrm{geo}}}{G_{t,s}^{\star}},
    \qquad w_{t,s}^{\mathrm{KL}}
        =\left(\frac{c_{\mathrm{geo}}}{G_{t,s}^{\star}}\right)^2,
    \label{eq:geometry_matched_weights}
\end{equation}
where $c_{\mathrm{geo}}>0$ is shared across transitions and the weights
are constant with respect to model parameters. The derivation is given in
Appendix~\ref{app:geometry_matched_scaling}.

\paragraph{Training objective.}
Following Flow-GRPO \citep{liu2025flow}, we average log-densities over the
$D$ scalar elements of the transition latent, i.e., all non-batch
dimensions. Thus the clipped term uses $\bar r_{t,s}=r_{t,s}^{1/D}$ and
the reference regularizer uses $\overline D_{\mathrm{KL}}=D_{\mathrm{KL}}/D$:
\begin{equation}
    \mathcal L_{\mathrm{geo}}
    =\mathbb E\!\left[
        w_{t,s}^{\mathrm{policy}}\bar\ell_{t,s}^{\mathrm{clip}}(\theta)
        +\lambda_{\mathrm{KL}}w_{t,s}^{\mathrm{KL}}
         \overline D_{\mathrm{KL}}(q_\theta\,\Vert\,q_{\mathrm{ref}})
        \right].
    \label{eq:geometry_matched_grpo_objective}
\end{equation}
Here $\bar\ell^{\mathrm{clip}}$ is the negative clipped surrogate in
Eq.~\ref{eq:grpo_objective}, with $r$ replaced by $\bar r$, and both
policies are conditioned on the recorded state. At fixed $D$, dimension
averaging changes the KL scale but leaves the minimizing coefficients
and the relative $1/G^{\star}$ and $1/(G^{\star})^2$ weights unchanged.
Advantage processing and hyperparameters are specified in
Appendix~\ref{app:experimental_details}.

\paragraph{Policy sensitivity after scale matching.}
For any candidate policy, applying its matched weight to the
dimension-averaged velocity score at $\theta=\theta_{\mathrm{old}}$
gives $-c_{\mathrm{geo}}\bm\xi_{\mathrm{old}}/D$.
The weighted reference KL is
$c_{\mathrm{geo}}^2\|\bm v_\theta-\bm v_{\mathrm{ref}}\|_2^2/(2D)$,
so its penalty for a fixed velocity deviation is also identical across
matched candidates. These cancellations normalize explicit loss scales.
They do not change each candidate's sampling distribution, likelihood
ratio, or clipping thresholds.

For a prescribed output update at a fixed state, timestep pair, and
dimension, let $d=G_{t,s}\|\Delta\bm v\|_2$. With an action drawn from
the candidate's old policy, the dimension-averaged log-ratio satisfies
\begin{equation}
    \log\bar r\sim\mathcal N\!\left(-\frac{d^2}{2D},\frac{d^2}{D^2}\right).
    \label{eq:reweight_ratio_sensitivity}
\end{equation}
Thus minimizing $G_{t,s}$ still minimizes the conditional KL and
log-ratio variance induced by the same output update after matching.
It also minimizes the probability that $\bar r$ lies outside
$[1-\varepsilon_{\mathrm{clip}},1+\varepsilon_{\mathrm{clip}}]$
whenever
$-\log(1-\varepsilon_{\mathrm{clip}})\leq
3\log(1+\varepsilon_{\mathrm{clip}})$,
as holds for our $\varepsilon_{\mathrm{clip}}=10^{-3}$
(Proposition~\ref{prop:clipping_window}, Appendix~\ref{app:clipping_window}).
This gives a direct connection between coefficient selection and the
clipping window used in training.

This probability compares prescribed updates under the respective
old-policy distributions. Actual updates also depend on sampled
advantages, residuals, model Jacobians, and the optimizer, while active
clipping additionally depends on the advantage sign. Scale matching
therefore does not guarantee identical realized updates or a lower
observed clipping frequency. For a recorded action reused after an
update, $\bm\xi_\theta=\bm\xi_{\mathrm{old}}
-(\kappa_{t,s}/\alpha_{t,s})\Delta\bm v$; the resulting dependence of
later gradients on the coefficients is detailed in
Appendix~\ref{app:geometry_matched_scaling}.

\paragraph{Effect on the sampling distribution.}
At the same state and velocity prediction, changing the coefficients
generally changes both the conditional mean and fresh-noise variance:
$\bm\mu_{\beta_2}-\bm\mu_{\beta_1}
=(\beta_2-\beta_1)\hat{\bm\epsilon}$.
A global learning-rate or clipping adjustment leaves this sampling
distribution unchanged. Moreover, geometry-factor ratios generally vary
across transitions. Coefficient selection therefore changes the policy
beyond a uniform rescaling of its optimization steps.

The sampling and training procedure is given in
Algorithm~\ref{alg:geometry_grpo}
(Appendix~\ref{app:training_algorithm}).

\newlength{\savedfloatsep}
\newlength{\savedtextfloatsep}
\newlength{\savedintextsep}
\setlength{\savedfloatsep}{\floatsep}
\setlength{\savedtextfloatsep}{\textfloatsep}
\setlength{\savedintextsep}{\intextsep}
\setlength{\floatsep}{6pt plus 1pt minus 1pt}
\setlength{\textfloatsep}{10pt plus 1pt minus 2pt}
\setlength{\intextsep}{8pt plus 1pt minus 1pt}
\input{sec/4_exp}
\input{sec/5_related}
\input{sec/6_conclusion}

\bibliography{iclr2027_conference}
\bibliographystyle{iclr2027_conference}

\appendix
\raggedbottom
\clearpage
\setlength{\floatsep}{\savedfloatsep}
\setlength{\textfloatsep}{\savedtextfloatsep}
\setlength{\intextsep}{\savedintextsep}

\section{Additional Derivations and Proofs}
\label{app:derivations_and_proofs}

\subsection{Coefficient Form of Finite Flow Transitions}
\label{sec:app1}

\begin{table}[ht]
    \centering
    \caption{Finite transitions from $t$ to $s$ ($0<s<t<1$) in the form
    of Eq.~\ref{eq:common_transition_form}. The last column is the
    deviation from the coefficient-preserving condition
    $\alpha_{t,s}^{2}+\beta_{t,s}^{2}=s^{2}$. The Flow-GRPO row is the
    standard uncapped Euler--Maruyama construction at interior timesteps;
    endpoint and noise-cap variants are given in Appendix~\ref{sec:app1}.}
    \label{tab:transition_coefficients}
    \vspace{0.5em}
    \begin{tabular}{lccc}
        \toprule
        Transition
        & $\alpha_{t,s}$
        & $\beta_{t,s}$
        & $\alpha_{t,s}^{2}+\beta_{t,s}^{2}-s^{2}$ \\
        \midrule
        ODE Euler
        & $0$
        & $s$
        & $0$ \\
        Flow-GRPO (Euler--Maruyama)
        & $\sigma_{t}\sqrt{\Delta_{t,s}}$
        & $s-\dfrac{\Delta_{t,s}\sigma_{t}^{2}}{2t}$
        & $\dfrac{\Delta_{t,s}^{2}\sigma_{t}^{2}}{t}
          \left(1+\dfrac{\sigma_{t}^{2}}{4t}\right)$ \\
        Flow-CPS
        & $s\sin(\pi\phi/2)$
        & $s\cos(\pi\phi/2)$
        & $0$ \\
        \bottomrule
    \end{tabular}
\end{table}

This section derives the unified coefficient representation of
Eq.~\ref{eq:common_transition_form} for the entries of
Table~\ref{tab:transition_coefficients}, with emphasis on the standard
Euler--Maruyama transition of Flow-GRPO and the modifications used in its
implementation at the schedule endpoint and under a capped noise scale. The
coefficients of Flow-CPS and of the policies studied in this paper are
given directly in the main text. Throughout,
$0<s<t<1$ unless stated otherwise, $\Delta_{t,s}=t-s$, and the estimates
$\hat{\bm{x}}_{0}=\bm{x}_{t}-t\bm{v}_{\theta}$ and
$\hat{\bm{\epsilon}}=\bm{x}_{t}+(1-t)\bm{v}_{\theta}$ are evaluated at
$(\bm{x}_{t},t)$; the arguments of $\bm{v}_{\theta}$ are omitted.

\subsubsection{Deterministic Euler Step}

Inverting the definitions of the estimates gives
$\bm{x}_{t}=(1-t)\hat{\bm{x}}_{0}+t\hat{\bm{\epsilon}}$ and
$\bm{v}_{\theta}=\hat{\bm{\epsilon}}-\hat{\bm{x}}_{0}$. Hence,
\begin{equation}
    \bm{x}_{t}-\Delta_{t,s}\bm{v}_{\theta}
    =
    (1-t+\Delta_{t,s})\hat{\bm{x}}_{0}
    +
    (t-\Delta_{t,s})\hat{\bm{\epsilon}}
    =
    (1-s)\hat{\bm{x}}_{0}
    +
    s\hat{\bm{\epsilon}},
    \label{eq:app_ode_euler_identity}
\end{equation}
which is the identity quoted in Section~\ref{sec:stochastic_transitions}
and the ODE Euler row of Table~\ref{tab:transition_coefficients}. The
identity holds for any velocity output and any span; it rewrites the Euler
update in the
coefficient form and makes no claim about the accuracy of the update as an
ODE solution.

\subsubsection{Flow-GRPO Transition}

Flow-GRPO \citep{liu2025flow} samples from the SDE
\begin{equation}
    \mathrm{d}\bm{x}_{t}
    =
    \left[
        \bm{v}_{\theta}(\bm{x}_{t},t)
        +
        \frac{\sigma_{t}^{2}}{2t}
        \bigl(\bm{x}_{t}+(1-t)\bm{v}_{\theta}(\bm{x}_{t},t)\bigr)
    \right]\mathrm{d}t
    +
    \sigma_{t}\,\mathrm{d}\bm{w}_{t},
    \qquad
    \sigma_{t}=a\sqrt{\frac{t}{1-t}},
    \label{eq:app_flow_grpo_sde}
\end{equation}
integrated from $t=1$ toward $t=0$, where $a>0$ is the noise level. One
Euler--Maruyama step from $t$ to $s$ freezes the coefficients at
$(\bm{x}_{t},t)$ and uses the increment $\mathrm{d}t=-\Delta_{t,s}$:
\begin{equation}
    \bm{x}_{s}^{\mathrm{E}}
    =
    \bm{x}_{t}
    -
    \Delta_{t,s}
    \left[
        \bm{v}_{\theta}
        +
        \frac{\sigma_{t}^{2}}{2t}
        \bigl(\bm{x}_{t}+(1-t)\bm{v}_{\theta}\bigr)
    \right]
    +
    \sigma_{t}\sqrt{\Delta_{t,s}}\,\bm{\xi},
    \qquad
    \bm{\xi}\sim\mathcal{N}(\bm{0},\bm{I}).
    \label{eq:app_flow_grpo_euler_step}
\end{equation}
This is the standard update on which the implementation is based; the
endpoint and noise-cap modifications are described at the end of this
section. The bracketed drift contains
$\bm{x}_{t}+(1-t)\bm{v}_{\theta}=\hat{\bm{\epsilon}}$, and the first two
terms form the deterministic Euler step of
Eq.~\ref{eq:app_ode_euler_identity}. Therefore,
\begin{equation}
    \bm{x}_{s}^{\mathrm{E}}
    =
    (1-s)\hat{\bm{x}}_{0}
    +
    \left(
        s-\frac{\Delta_{t,s}\sigma_{t}^{2}}{2t}
    \right)\hat{\bm{\epsilon}}
    +
    \sigma_{t}\sqrt{\Delta_{t,s}}\,\bm{\xi},
    \label{eq:app_flow_grpo_coefficient_form}
\end{equation}
which gives the Flow-GRPO row of Table~\ref{tab:transition_coefficients},
\begin{equation}
    \alpha_{t,s}^{\mathrm{E}}
    =
    \sigma_{t}\sqrt{\Delta_{t,s}},
    \qquad
    \beta_{t,s}^{\mathrm{E}}
    =
    s-\frac{\Delta_{t,s}\sigma_{t}^{2}}{2t}.
    \label{eq:app_flow_grpo_coefficients}
\end{equation}
The SDE correction leaves the clean-sample coefficient untouched, shrinks the
predicted-noise coefficient by $\Delta_{t,s}\sigma_{t}^{2}/(2t)$, and adds
fresh noise with standard deviation $\sigma_{t}\sqrt{\Delta_{t,s}}$.

Expanding the squared sum of the noise coefficients,
\begin{align}
    \left(\alpha_{t,s}^{\mathrm{E}}\right)^{2}
    +
    \left(\beta_{t,s}^{\mathrm{E}}\right)^{2}
    -
    s^{2}
    &=
    \sigma_{t}^{2}\Delta_{t,s}
    -
    \frac{s\Delta_{t,s}\sigma_{t}^{2}}{t}
    +
    \frac{\Delta_{t,s}^{2}\sigma_{t}^{4}}{4t^{2}}
    \notag\\
    &=
    \frac{\sigma_{t}^{2}\Delta_{t,s}(t-s)}{t}
    +
    \frac{\Delta_{t,s}^{2}\sigma_{t}^{4}}{4t^{2}}
    \notag\\
    &=
    \frac{\Delta_{t,s}^{2}\sigma_{t}^{2}}{t}
    \left(1+\frac{\sigma_{t}^{2}}{4t}\right),
    \label{eq:app_flow_grpo_deviation}
\end{align}
which is the last column of the Flow-GRPO row of
Table~\ref{tab:transition_coefficients}. Both factors are
positive, so the squared sum always exceeds $s^{2}$. Substituting
$\sigma_{t}^{2}=a^{2}t/(1-t)$ gives the equivalent form
\begin{equation}
    \left(\alpha_{t,s}^{\mathrm{E}}\right)^{2}
    +
    \left(\beta_{t,s}^{\mathrm{E}}\right)^{2}
    -
    s^{2}
    =
    \frac{\Delta_{t,s}^{2}a^{2}}{1-t}
    \left(1+\frac{a^{2}}{4(1-t)}\right).
    \label{eq:app_flow_grpo_deviation_noise_level}
\end{equation}

\paragraph{Sign of the predicted-noise coefficient.}
From Eq.~\ref{eq:app_flow_grpo_coefficient_form},
\begin{equation}
    \beta_{t,s}^{\mathrm{E}}<0
    \quad\Longleftrightarrow\quad
    \Delta_{t,s}\sigma_{t}^{2}>2ts
    \quad\Longleftrightarrow\quad
    \Delta_{t,s}a^{2}>2s(1-t).
    \label{eq:app_flow_grpo_negative_beta}
\end{equation}
When this holds, the coefficient of the predicted noise changes sign. The
condition is easier to satisfy for large spans and for source timesteps
close to $1$.

\paragraph{Implementation modifications.}
The implementation of the Flow-GRPO transition differs from
Eq.~\ref{eq:app_flow_grpo_euler_step} only in the noise scale, which is
replaced by an effective scale $\tilde{\sigma}_{t,s}$ that accounts for the
schedule endpoint and for an optional cap on the injected noise. Let
\begin{equation}
    \sigma_{t,s}^{\mathrm{base}}
    =
    \begin{cases}
        a\sqrt{\dfrac{t}{1-t}}, & t<1,\\[1.2ex]
        \dfrac{a}{\sqrt{1-s}}, & t=1,
    \end{cases}
    \qquad
    \tilde{\sigma}_{t,s}
    =
    \min\left\{
        \sigma_{t,s}^{\mathrm{base}},\,
        \frac{c_{\mathrm{clip}}}{\sqrt{\Delta_{t,s}}}
    \right\},
    \label{eq:app_flow_grpo_effective_scale}
\end{equation}
where $c_{\mathrm{clip}}>0$ is the cap on the standard deviation of the
injected noise, and $c_{\mathrm{clip}}=\infty$ when no cap is used. The
implemented update is Eq.~\ref{eq:app_flow_grpo_euler_step} with
$\sigma_{t}$ replaced by $\tilde{\sigma}_{t,s}$, so
Eqs.~\eqref{eq:app_flow_grpo_coefficients}--\eqref{eq:app_flow_grpo_deviation}
hold verbatim with the same substitution, and the injected noise has
standard deviation
$\alpha_{t,s}^{\mathrm{E}}=\tilde{\sigma}_{t,s}\sqrt{\Delta_{t,s}}\leq c_{\mathrm{clip}}$.
Two cases are worth recording explicitly.

\emph{First transition before capping.}
The base scale $a\sqrt{t/(1-t)}$ diverges as $t\rightarrow1$, so the
interior formulas apply to $t<1$ only; at $t_{0}=1$ the implementation
replaces the source timestep by the target timestep inside the denominator
$1-t$, i.e.\ it uses $1-s$ in place of $1-t$, which gives the finite value
$\sigma_{1,s}^{\mathrm{base}}=a/\sqrt{1-s}$ for $s<1$. If the cap
is inactive, substituting $t=1$ and this scale into
Eq.~\ref{eq:app_flow_grpo_coefficients} yields
\begin{equation}
    \alpha_{1,s}^{\mathrm{E}}
    =
    a,
    \qquad
    \beta_{1,s}^{\mathrm{E}}
    =
    s-\frac{a^{2}}{2},
    \qquad
    \left(\alpha_{1,s}^{\mathrm{E}}\right)^{2}
    +
    \left(\beta_{1,s}^{\mathrm{E}}\right)^{2}
    -
    s^{2}
    =
    a^{2}(1-s)+\frac{a^{4}}{4},
    \label{eq:app_flow_grpo_first_step}
\end{equation}
so the first transition injects noise with standard deviation $a$
irrespective of the span. With the cap, the cap threshold at $t=1$ is
$c_{\mathrm{clip}}/\sqrt{1-s}$, which has the same denominator as the base
scale, so $\alpha_{1,s}^{\mathrm{E}}=\min\{a,c_{\mathrm{clip}}\}$.

\emph{Active cap.}
In the region where the cap is active, i.e.\
$(\sigma_{t,s}^{\mathrm{base}})^{2}\Delta_{t,s}>c_{\mathrm{clip}}^{2}$,
the effective scale satisfies
$\tilde{\sigma}_{t,s}^{2}=c_{\mathrm{clip}}^{2}/\Delta_{t,s}$ and
Eqs.~\eqref{eq:app_flow_grpo_coefficients}--\eqref{eq:app_flow_grpo_deviation}
become
\begin{equation}
    \alpha_{t,s}^{\mathrm{E}}
    =
    c_{\mathrm{clip}},
    \qquad
    \beta_{t,s}^{\mathrm{E}}
    =
    s-\frac{c_{\mathrm{clip}}^{2}}{2t},
    \qquad
    \left(\alpha_{t,s}^{\mathrm{E}}\right)^{2}
    +
    \left(\beta_{t,s}^{\mathrm{E}}\right)^{2}
    -
    s^{2}
    =
    \frac{c_{\mathrm{clip}}^{2}\Delta_{t,s}}{t}
    +
    \frac{c_{\mathrm{clip}}^{4}}{4t^{2}}.
    \label{eq:app_flow_grpo_clipped}
\end{equation}
At fixed $t$ and $c_{\mathrm{clip}}$, the deviation is therefore affine in
the span inside the active region, and it remains strictly positive, so
the capped transition still lies outside the coefficient-preserving family.
Outside the active region the uncapped coefficients apply. Whether the cap
is active for a given transition depends on the schedule, the noise level,
and $c_{\mathrm{clip}}$; the values used in our experiments are reported in
Appendix~\ref{app:experimental_details}.

\paragraph{Terminal boundary.}
At $s=0$ the coefficient-preserving condition reduces to
$\alpha_{t,0}=\beta_{t,0}=0$, and relative deviations normalized by $s^{2}$
are undefined; this is consistent with keeping the final transition
deterministic.

\subsection{Conditional KL and Likelihood-Ratio Geometry}
\label{app:policy_geometry}

\subsubsection{KL Equivalence and Fisher Information}

\paragraph{Proof of Proposition~\ref{prop:policy_sensitivity}.}
Fix $(\bm x_t,\bm c,t,s)$ and one admissible coefficient choice.
The old and updated policies have common covariance
$\alpha_{t,s}^2\bm I$, and their mean difference is
\begin{equation}
    \Delta\bm\mu
    =\bm\mu_\beta(\bm v_0+\Delta\bm v)-\bm\mu_\beta(\bm v_0)
    =\kappa_{t,s}\Delta\bm v.
    \label{eq:app_mean_velocity_shift}
\end{equation}
The equal-covariance Gaussian KL formula therefore gives, in either direction,
\begin{equation}
    K_\beta(\Delta\bm v)
    =\frac{\|\Delta\bm\mu\|_2^2}{2\alpha_{t,s}^2}
    =\frac12G_{t,s}^2\|\Delta\bm v\|_2^2.
    \label{eq:app_equal_covariance_kl}
\end{equation}
This identity is exact for finite output differences. Moreover,
$t(1-s)-(1-t)\beta_{t,s}\geq t(1-s)-(1-t)s=t-s>0$,
so $G_{t,s}>0$ throughout the family. For any prescribed nonzero update,
$\|\Delta\bm v\|_2^2/2$ is a positive constant across candidates.
Minimizing the conditional KL in Eq.~\ref{eq:kl_selection_problem}
is therefore equivalent to minimizing $G_{t,s}^2$ or $G_{t,s}$, which
establishes Eq.~\ref{eq:geometry_optimization_problem}.
For two candidates $A,B$, Eq.~\ref{eq:app_equal_covariance_kl} also implies
$G_A\leq G_B$ if and only if
$K_A(\Delta\bm v)\leq K_B(\Delta\bm v)$ for every output update;
the reverse implication follows by evaluating any nonzero update.
The minimizing coefficients are consequently independent of the prescribed
update's nonzero magnitude and direction. \hfill$\square$

\paragraph{Normalized KL response and Fisher information.}
Dividing Eq.~\ref{eq:app_equal_covariance_kl} by
$\|\Delta\bm v\|_2^2/2$ for a nonzero update gives the normalized KL
response $G_{t,s}^2$, interpreted as policy sensitivity in
Eq.~\ref{eq:normalized_policy_sensitivity}.
To obtain the Fisher information, differentiate with the action $\bm x_s$
held fixed:
\begin{equation}
    \nabla_{\bm v}\log q_\beta(\bm x_s\mid\bm v)
    =\frac{\kappa_{t,s}}{\alpha_{t,s}^2}
       (\bm x_s-\bm\mu_\beta(\bm v)).
    \label{eq:app_velocity_score}
\end{equation}
Under the policy at $\bm v$, write
$\bm x_s-\bm\mu_\beta(\bm v)=\alpha_{t,s}\bm\xi$ with
$\bm\xi\sim\mathcal N(\bm0,\bm I)$. Hence
\begin{equation}
    \mathcal I_{\bm v}
      =\mathbb E\!\left[(\nabla_{\bm v}\log q_\beta)
                        (\nabla_{\bm v}\log q_\beta)^\top\right]
      =\frac{\kappa_{t,s}^2}{\alpha_{t,s}^2}\bm I
      =G_{t,s}^2\bm I.
    \label{eq:app_velocity_fisher}
\end{equation}

\subsubsection{Likelihood-Ratio Moments}
\label{app:ratio_moments}

Keep the state and timestep pair fixed, and prescribe an output update
$\Delta\bm v\in\mathbb R^D$ independently of the action used to evaluate
the ratio. For an action from the old policy, write
$\bm x_s=\bm\mu_\beta(\bm v_0)+\alpha_{t,s}\bm\xi$,
$\bm\xi\sim\mathcal N(\bm0,\bm I)$, and set
$\bm z=\kappa_{t,s}\Delta\bm v/\alpha_{t,s}$.
For $r=q_\beta(\bm x_s\mid\bm v_0+\Delta\bm v)/
q_\beta(\bm x_s\mid\bm v_0)$, the Gaussian log-density difference is
\begin{equation}
    \begin{aligned}
    \log r
    &=-\frac{\|\alpha_{t,s}\bm\xi-\kappa_{t,s}\Delta\bm v\|_2^2
                  -\|\alpha_{t,s}\bm\xi\|_2^2}{2\alpha_{t,s}^2}\\
    &=\bm\xi^\top\bm z-\frac12\|\bm z\|_2^2,
    \qquad \|\bm z\|_2=G_{t,s}\|\Delta\bm v\|_2=d.
    \end{aligned}
    \label{eq:app_log_ratio_expansion}
\end{equation}
Since $\bm\xi^\top\bm z\sim\mathcal N(0,d^2)$,
\begin{equation}
    \log r\sim\mathcal N(-d^2/2,d^2),
    \qquad \operatorname{Var}(\log r)=2K_\beta(\Delta\bm v).
    \label{eq:log_ratio_distribution}
\end{equation}
In particular,
$-\mathbb E_{q_{\mathrm{old}}}[\log r]=d^2/2
=D_{\mathrm{KL}}(q_{\mathrm{old}}\Vert q_\theta)
=K_\beta(\Delta\bm v)$, where the last equality uses the common
covariance. The log-normal moments are
\begin{equation}
    \mathbb E_{q_{\mathrm{old}}}[r^m]
       =\exp\!\left(\frac{m^2-m}{2}d^2\right),
    \qquad
    \mathbb E[r]=1,\qquad \operatorname{Var}(r)=e^{d^2}-1.
    \label{eq:app_ratio_moment}
\end{equation}
These conditional moments characterize likelihood changes without
assumptions on rewards or advantages.

\subsubsection{Dimension-Averaged Log-Densities}
\label{app:dimension_averaging}

The implemented ratio is $\bar r=\exp((\log q_\theta-
\log q_{\mathrm{old}})/D)=r^{1/D}$, where $D\geq1$ is the latent
dimension. Under the same fixed-state, prescribed-update comparison,
Eq.~\ref{eq:log_ratio_distribution} gives
\begin{equation}
    \log\bar r\sim\mathcal N\!\left(-\frac{d^2}{2D},\frac{d^2}{D^2}\right),
    \qquad
    \overline K_\beta=\frac{K_\beta}{D}
        =\frac{G_{t,s}^2}{2D}\|\Delta\bm v\|_2^2.
    \label{eq:app_averaged_log_ratio}
\end{equation}
Consequently, $\operatorname{Var}(\log\bar r)=2\overline K_\beta/D$.
With $D$ fixed across candidate coefficients, minimizing $G_{t,s}$
minimizes both the normalized KL and the averaged log-ratio variance for
the same output update.

The quantity $\bar r$ is a scaled surrogate, not generally the density
ratio of the two policies. Its moments are
\begin{equation}
    \mathbb E[\bar r]
       =\exp\!\left(-\frac{d^2}{2D}+\frac{d^2}{2D^2}\right),
    \qquad
    \operatorname{Var}(\bar r)
       =\exp\!\left(-\frac{d^2}{D}+\frac{d^2}{D^2}\right)
          \left(\exp\!\left(\frac{d^2}{D^2}\right)-1\right).
    \label{eq:app_averaged_ratio_moments}
\end{equation}
For $D>2$ this variance is not globally monotone in $d^2$ and tends to
zero as $d^2\to\infty$. The KL and log-ratio variance ordering therefore
does not imply a global ordering of $\operatorname{Var}(\bar r)$.

\subsubsection{Clipping-Window Exceedance}
\label{app:clipping_window}

\begin{proposition}[Population clipping-window exceedance]
\label{prop:clipping_window}
Fix the state, timestep pair, and dimension $D\geq1$, and prescribe
$\Delta\bm v$ independently of the evaluation draw from each candidate's
old policy. Let $0<\epsilon<1$, $d=G_{t,s}\|\Delta\bm v\|_2$, and
\begin{equation}
    a=-D\log(1-\epsilon),\qquad b=D\log(1+\epsilon).
    \label{eq:app_clipping_thresholds}
\end{equation}
For $d>0$, the probability of leaving the clipping window is
\begin{equation}
    \begin{aligned}
    p_{\mathrm{out}}(d)
      &:=\Pr\!\left(\bar r\notin[1-\epsilon,1+\epsilon]\right)\\
      &=\Phi\!\left(\frac d2-\frac ad\right)
        +\Phi\!\left(-\frac d2-\frac bd\right),
    \end{aligned}
    \label{eq:app_clipping_exceedance}
\end{equation}
where $\Phi$ is the standard normal distribution function, and
$p_{\mathrm{out}}(0)=0$. If $a\leq3b$, then $p_{\mathrm{out}}(d)$ is
strictly increasing for $d>0$. Hence, under this condition, minimizing
$G_{t,s}$ also minimizes the population clipping-window exceedance
probability for every prescribed nonzero output update.
\end{proposition}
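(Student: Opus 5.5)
Starting from Eq.~\ref{eq:app_averaged_log_ratio}, I write $\log\bar r=(\bm\xi^\top\bm z-d^2/2)/D$ with $\bm\xi^\top\bm z=dZ$ and $Z\sim\mathcal N(0,1)$. The event $\bar r<1-\epsilon$ is $dZ-d^2/2<-a$, that is, $Z<d/2-a/d$. The event $\bar r>1+\epsilon$ is $dZ-d^2/2>b$, that is, $Z>d/2+b/d$; by symmetry of $Z$ its probability is $\Phi(-d/2-b/d)$. The two events are disjoint, and the boundary has probability zero, so adding them gives Eq.~\ref{eq:app_clipping_exceedance}. At $d=0$ we have $\bar r\equiv1$, which lies in the window, so $p_{\mathrm{out}}(0)=0$. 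Both thresholds are positive: $a>0$ because $0<\epsilon<1$, and $b>0$.

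For monotonicity I differentiate in $d$ for $d>0$:
\[
p_{\mathrm{out}}'(d)=\varphi\!\left(\tfrac d2-\tfrac ad\right)\left(\tfrac12+\tfrac a{d^2}\right)+\varphi\!\left(\tfrac d2+\tfrac bd\right)\left(-\tfrac12+\tfrac b{d^2}\right).
\]
The first term is strictly positive. The second term is negative only when $d^2>2b$, so that is the only case needing work. There I take the ratio of the two Gaussian densities:
\[
\frac{\varphi(d/2-a/d)}{\varphi(d/2+b/d)}=\exp\!\left(\tfrac12\Big[(\tfrac d2+\tfrac bd)^2-(\tfrac d2-\tfrac ad)^2\Big]\right)=\exp\!\left(\tfrac{a+b}{2}+\tfrac{b^2-a^2}{2d^2}\right).
\]
With $u=d^2$, strict positivity of $p_{\mathrm{out}}'$ reduces to
\[
(u+2a)\exp\!\left(\tfrac{a+b}2+\tfrac{b^2-a^2}{2u}\right)>u-2b \qquad\text{for all } u>2b.
\]

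This inequality is the main obstacle. The plan is to bound the exponential from below using $e^x\geq1+x$, or better $e^x\geq 1+x+x^2/2$ for $x\geq0$, and then verify the resulting rational inequality in $u$.

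If $b\geq a$, the exponent is at least $(a+b)/2>0$, so the left side exceeds $u+2a>u-2b$ and we are done.

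If $a>b$, the exponent is increasing in $u$, so its infimum over $u>2b$ is attained at $u=2b$, where it equals $(a+b)/2+(b^2-a^2)/(4b)=(a+b)(3b-a)/(4b)$. This is nonnegative exactly when $a\leq3b$, which is where the hypothesis comes from. Hence the exponential factor is at least $1$ for all $u\geq2b$, and the left side is at least $u+2a>u-2b$. So strict increase holds whenever $a\leq 3b$.

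The final claim follows from this monotonicity. Since $d=G_{t,s}\|\Delta\bm v\|_2$ with $\|\Delta\bm v\|_2>0$ fixed across candidates and $D$, $a$, $b$ independent of $\beta$, $p_{\mathrm{out}}$ is a strictly increasing function of $G_{t,s}$. Its minimizers over the admissible family therefore coincide with those of $G_{t,s}$, which are given by Theorem~\ref{thm:optimal_transition}.

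Finally, the condition $a\leq3b$ reads $-\log(1-\epsilon)\leq3\log(1+\epsilon)$ after dividing by $D$. This matches the main-text condition, and it can be checked numerically for $\epsilon=10^{-3}$.
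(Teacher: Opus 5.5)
Your proposal is correct and follows essentially the same route as the paper. You use the same representation $\log\bar r=(dZ-d^2/2)/D$, the same derivative, and, in the only nontrivial regime $d^2>2b$, the same density-ratio bound $\exp\bigl((a+b)(3b-a)/(4b)\bigr)\geq1$, which gives $p_{\mathrm{out}}'(d)\geq\varphi(d/2+b/d)(a+b)/d^2>0$. Your separate case $b\geq a$ is vacuous, since $-\log(1-\epsilon)>\log(1+\epsilon)$ for $0<\epsilon<1$, and the planned $e^x\geq1+x$ refinement is never needed.
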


\paragraph{Proof.}
By Eq.~\ref{eq:app_averaged_log_ratio},
$\log\bar r=(-d^2/2+dZ)/D$ with $Z\sim\mathcal N(0,1)$.
The lower-window event is $Z<d/2-a/d$, and the upper-window event is
$Z>d/2+b/d$, which gives Eq.~\ref{eq:app_clipping_exceedance}.
At $d=0$, $\bar r=1$ almost surely.

Let $\varphi$ denote the standard normal density. Differentiating for $d>0$
gives
\begin{equation}
    p_{\mathrm{out}}'(d)
      =\varphi\!\left(\frac d2-\frac ad\right)
         \left(\frac12+\frac{a}{d^2}\right)
       +\varphi\!\left(\frac d2+\frac bd\right)
         \left(-\frac12+\frac{b}{d^2}\right).
    \label{eq:app_clipping_derivative}
\end{equation}
If $d^2\leq2b$, the first term is positive and the second is nonnegative.
If $d^2>2b$, use $a>b>0$ and $a\leq3b$ to obtain
\begin{equation}
    \frac{\varphi(d/2-a/d)}{\varphi(d/2+b/d)}
      =\exp\!\left(\frac{a+b}{2}-\frac{a^2-b^2}{2d^2}\right)
      \geq\exp\!\left(\frac{(a+b)(3b-a)}{4b}\right)
      \geq1.
    \label{eq:app_clipping_density_comparison}
\end{equation}
Substitution into Eq.~\ref{eq:app_clipping_derivative} then yields
$p_{\mathrm{out}}'(d)\geq
\varphi(d/2+b/d)(a+b)/d^2>0$.
This proves strict monotonicity. \hfill$\square$

The condition $a\leq3b$ is equivalent to
$-\log(1-\epsilon)\leq3\log(1+\epsilon)$ and holds for the
experimental choice $\epsilon=0.001$, independently of $D$.
The proposition orders the total population probability of leaving the
window. Active PPO clipping additionally depends on the advantage sign:
the clipped branch is active above the upper threshold for positive
advantages and below the lower threshold for negative advantages.
The result also does not establish an ordering of empirical minibatch
fractions when the output update depends on the same samples used to
evaluate those fractions.

\subsubsection{Fisher Metric in Parameter Space}
\label{app:fisher_geometry}

Let $\bm J_\theta=\partial\bm v_\theta/\partial\theta$ at the fixed
conditioning state. The chain rule gives
\begin{equation}
    \nabla_\theta\log q_\theta
        =\bm J_\theta^\top\nabla_{\bm v}\log q_\theta,
    \qquad
    \mathcal I_\theta
        =\bm J_\theta^\top\mathcal I_{\bm v}\bm J_\theta
        =G_{t,s}^2\bm J_\theta^\top\bm J_\theta.
    \label{eq:app_parameter_fisher}
\end{equation}
For the same model and state, coefficient selection changes the scalar
factor multiplying this conditional metric. It does not change the
Jacobian or its directional anisotropy. Unlike the exact KL identity in
output space, the corresponding quadratic approximation in parameter
space uses a local linearization of the model. A shared parameter update
need not independently realize prescribed output changes across states.

\subsubsection{Equivalent KL-Budget Interpretation}

The same conditional comparison can be expressed as an output-space
allowance. For a common budget $\delta>0$, define
$\mathcal T_\beta(\delta)=\{\Delta\bm v\in\mathbb R^D:
K_\beta(\Delta\bm v)\leq\delta\}$.
Equation~\ref{eq:app_equal_covariance_kl} gives
\begin{equation}
    K_\beta(\Delta\bm v)\leq\delta
    \quad\Longleftrightarrow\quad
    \|\Delta\bm v\|_2\leq\frac{\sqrt{2\delta}}{G_{t,s}}.
    \label{eq:app_budget_ball}
\end{equation}
Thus the admissible region is an entire centered ball with radius
\begin{equation}
    \rho_\beta(\delta)=\frac{\sqrt{2\delta}}{G_{t,s}(\beta_{t,s})}.
    \label{eq:velocity_update_radius}
\end{equation}
Comparing these concentric balls yields
\begin{equation}
    G_A\leq G_B
    \quad\Longleftrightarrow\quad
    \mathcal T_B(\delta)\subseteq\mathcal T_A(\delta)
       \quad\text{for every }\delta>0.
    \label{eq:app_budget_ordering}
\end{equation}
The same equivalence holds at any one positive budget. Minimizing
$G_{t,s}$ therefore also maximizes the radius for every positive budget.
For a normalized budget $\overline K_\beta\leq\bar\delta$, the radius
is $\sqrt{2D\bar\delta}/G_{t,s}$, so the optimum is unchanged at fixed
$D$. At zero budget, every candidate permits only the zero update.

\subsection{Geometry-Optimal Transition}
\label{app:optimal_transition}

\subsubsection{Proof of Theorem~\ref{thm:optimal_transition}}

Write $A=t(1-s)>0$ and $B=1-t\geq0$. For $0\leq\beta<s$,
$A-B\beta\geq A-Bs=t-s>0$, so
\begin{equation}
    G_{t,s}(\beta)=\frac{A-B\beta}{\sqrt{s^2-\beta^2}},
    \qquad
    \frac{\mathrm dG_{t,s}}{\mathrm d\beta}
       =\frac{A\beta-Bs^2}{(s^2-\beta^2)^{3/2}}.
    \label{eq:app_geometry_derivative}
\end{equation}
For $t<1$, the derivative changes sign from negative to positive at
$\beta^\star=Bs^2/A$. This point lies in the domain because
$\beta^\star/s=s(1-t)/(t(1-s))<1$, which is equivalent to $s<t$.
For $t=1$, $B=0$ and the derivative is strictly positive for $\beta>0$,
so the unique minimum is the boundary point $\beta^\star=0$.
In both cases $\alpha^\star=\sqrt{s^2-(\beta^\star)^2}$ is uniquely
determined by positivity and the coefficient constraint. Extending the
domain to $-s<\beta<s$ leaves the unique minimizer unchanged:
$A\beta-Bs^2<0$ throughout $\beta<0$.

Substitution yields
\begin{equation}
    (G_{t,s}^{\star})^2
      =\frac{A^2-B^2s^2}{s^2}
      =\frac{(A-Bs)(A+Bs)}{s^2}
      =\frac{(t-s)(t+s-2ts)}{s^2}.
    \label{eq:app_optimal_geometry_squared}
\end{equation}
Taking the positive square root proves the stated minimum.
Proposition~\ref{prop:policy_sensitivity} then establishes that these
coefficients minimize the conditional KL for every prescribed nonzero
velocity-output update. \hfill$\square$

\subsubsection{Span Dependence and Boundaries}

The normalized stochasticity parameter satisfies
$\cos(\pi\phi_{t,s}^{\star}/2)=\beta_{t,s}^{\star}/s=s(1-t)/(t(1-s))$.
At fixed $s\in(0,1)$, the derivative of this retained-noise fraction
with respect to $t$ is
$-s/(t^2(1-s))<0$. The retained noise coefficient fraction therefore
decreases as the span grows in this fixed-target comparison.
For $t=s+\Delta$,
\begin{equation}
    (G_{s+\Delta,s}^{\star})^2
      =\frac{2(1-s)}{s}\Delta+\frac{1-2s}{s^2}\Delta^2,
    \qquad
    G_{s+\Delta,s}^{\star}
      =\sqrt{\frac{2(1-s)}{s}}\sqrt\Delta+O(\Delta^{3/2}).
    \label{eq:app_small_step_geometry}
\end{equation}
The expansion holds as $\Delta\downarrow0$ with $s$ fixed in $(0,1)$.
For the full admissible range,
\begin{equation}
    \frac{\partial}{\partial t}(G_{t,s}^{\star})^2
      =\frac{2[t(1-2s)+s^2]}{s^2}>0.
    \label{eq:app_geometry_span_derivative}
\end{equation}
Indeed, if $s\leq1/2$, the first term in brackets is nonnegative and
$s^2>0$. If $s>1/2$, the bracket is at least
$1-2s+s^2=(1-s)^2>0$, since $t\leq1$. This proves strict monotonicity.
At $t=1$,
\begin{equation}
    \beta_{1,s}^{\star}=0,\qquad \alpha_{1,s}^{\star}=s,
    \qquad G_{1,s}^{\star}=\frac{1-s}{s}.
    \label{eq:app_endpoint_geometry}
\end{equation}
The terminal case $s=0$ is outside the stochastic family: the coefficient
constraint forces $\alpha_{t,0}=\beta_{t,0}=0$, giving the deterministic
Euler update and no non-degenerate Gaussian likelihood.

\subsubsection{Correspondence with SDE-DPM-Solver++}
\label{app:dpm_correspondence}

For $0<s<t<1$, the one-step SDE-DPM-Solver++ transition with data prediction
\citep{lu2022dpmsolverplusplus} is
\begin{equation}
    \bm x_s
      =\frac{\sigma_s}{\sigma_t}e^{-h}\bm x_t
        +a_s(1-e^{-2h})\hat{\bm x}_0
        +\sigma_s\sqrt{1-e^{-2h}}\bm\xi,
    \qquad h=\log\frac{a_s/\sigma_s}{a_t/\sigma_t}.
    \label{eq:app_dpm_solver_step}
\end{equation}
Here $a_u$ and $\sigma_u$ denote the clean-sample and noise coefficients
of the marginal schedule, distinct from the transition coefficients
$\alpha_{t,s},\beta_{t,s}$. For rectified flow, $a_u=1-u$ and
$\sigma_u=u$. Setting $b=e^{-h}=s(1-t)/(t(1-s))$ and using
$\bm x_t=(1-t)\hat{\bm x}_0+t\hat{\bm\epsilon}$ gives
\begin{equation}
    \begin{aligned}
    \frac{s}{t}b\bm x_t+(1-s)(1-b^2)\hat{\bm x}_0
       &=(1-s)\hat{\bm x}_0+sb\hat{\bm\epsilon},\\
    sb&=\beta_{t,s}^{\star},
       \qquad s\sqrt{1-b^2}=\alpha_{t,s}^{\star}.
    \end{aligned}
    \label{eq:app_solver_coefficient_correspondence}
\end{equation}
Thus the coefficients agree exactly for every velocity prediction, without
an oracle assumption. The correspondence extends continuously to $t=1$
through $b=0$; the log-SNR itself is not finite at that endpoint.

\paragraph{A Gaussian Markov interpretation.}
A related interpretation uses a Gaussian Markov process with the same
one-time conditional marginals as the linear probability path. For
$0<s<t\leq1$, define
\begin{equation}
    \begin{aligned}
    \bm x_s\mid\bm x_0
       &\sim\mathcal N((1-s)\bm x_0,s^2\bm I),\\
    \bm x_t\mid\bm x_s
       &\sim\mathcal N(\gamma_{t,s}\bm x_s,
                       (t^2-\gamma_{t,s}^2s^2)\bm I),
       \qquad \gamma_{t,s}=\frac{1-t}{1-s}.
    \end{aligned}
    \label{eq:app_markov_coupling}
\end{equation}
The innovation variance is positive because $t>\gamma_{t,s}s$. The
resulting marginal is
$\bm x_t\mid\bm x_0\sim\mathcal N((1-t)\bm x_0,t^2\bm I)$.
This Markov coupling differs from interpolating with a single shared
noise vector at all times; the marginal path alone does not specify the
joint law across time.

Gaussian conditioning gives
\begin{equation}
    \begin{aligned}
    \mathbb E[\bm x_s\mid\bm x_t,\bm x_0]
      &=(1-s)\bm x_0+\frac{s^2\gamma_{t,s}}{t^2}
                          (\bm x_t-(1-t)\bm x_0),\\
    \operatorname{Cov}(\bm x_s\mid\bm x_t,\bm x_0)
      &=\left(s^2-\frac{s^4\gamma_{t,s}^2}{t^2}\right)\bm I.
    \end{aligned}
    \label{eq:app_posterior_moments}
\end{equation}
Replacing $\bm x_0$ by $\hat{\bm x}_0$ and substituting
$\bm x_t-(1-t)\hat{\bm x}_0=t\hat{\bm\epsilon}$ yields
$\beta_{t,s}=s^2\gamma_{t,s}/t=\beta_{t,s}^{\star}$ and
$\alpha_{t,s}=\alpha_{t,s}^{\star}$. This is a plug-in posterior
interpretation of the selected transition, rather than a claim that the
learned reconstruction gives the true posterior of the data.

\subsection{Derivation and Scope of Geometry-Matched Scaling}
\label{app:geometry_matched_scaling}

The weights match two explicit scales of the selected conditional policy.
For actions sampled from the current policy,
Eq.~\ref{eq:app_velocity_score} becomes
$\nabla_{\bm v}\log q_\theta
=\operatorname{sgn}(\kappa_{t,s}^{\star})G_{t,s}^{\star}\bm\xi$,
where $\bm\xi\sim\mathcal N(\bm0,\bm I)$; the action is held fixed
when differentiating. The same identity applies at
$\theta=\theta_{\mathrm{old}}$ for recorded rollout actions. Multiplying
the score by $w^{\mathrm{policy}}=c_{\mathrm{geo}}/G_{t,s}^{\star}$
makes this explicit coefficient have magnitude $c_{\mathrm{geo}}$.

For a stored old-policy action evaluated after the model changes, the
standardized residual is instead
\begin{equation}
    \bm\xi_\theta
      =\bm\xi_{\mathrm{old}}
         -\frac{\kappa_{t,s}^{\star}}{\alpha_{t,s}^{\star}}\Delta\bm v,
    \qquad
    \nabla_{\bm v}\log q_\theta
      =\frac{\kappa_{t,s}^{\star}}{\alpha_{t,s}^{\star}}
           \bm\xi_{\mathrm{old}}-(G_{t,s}^{\star})^2\Delta\bm v.
    \label{eq:app_old_action_score}
\end{equation}
Here $\bm\xi_{\mathrm{old}}$ is standard Gaussian under the old policy.
Consequently, canceling the leading coefficient does not remove all
geometry dependence of later optimization gradients. Advantages, the
likelihood ratio, clipping, the model Jacobian, and the optimizer further
affect the parameter update.

For the reference term, both policies use the selected coefficients, so
\begin{equation}
    D_{\mathrm{KL}}(q_\theta\Vert q_{\mathrm{ref}})
       =\frac12(G_{t,s}^{\star})^2
          \|\bm v_\theta-\bm v_{\mathrm{ref}}\|_2^2,
    \qquad
    \nabla_{\bm v_\theta}^2 D_{\mathrm{KL}}
       =(G_{t,s}^{\star})^2\bm I.
    \label{eq:app_reference_kl_curvature}
\end{equation}
The weight $w^{\mathrm{KL}}=(c_{\mathrm{geo}}/G_{t,s}^{\star})^2$
maps this curvature to $c_{\mathrm{geo}}^2\bm I$. The same covariance
also makes the KL symmetric, although its reference-regularization role
differs from the updated-to-old conditional KL used for policy selection.

Dimension averaging introduces a factor $1/D$ into both the log-density
score and the reference KL. With the same weights, their matched explicit
scales become $c_{\mathrm{geo}}/D$ and $c_{\mathrm{geo}}^2/D$,
respectively. Thus the relative transition-wise weights are unchanged
at fixed $D$. This does not turn the averaged log-density into the log of
a normalized joint policy, or make its score outer product equal to the
curvature of the averaged KL.

\paragraph{Estimator scale and signal-to-noise ratio.}
At a fixed state, timestep pair, and latent dimension, with
$\theta=\theta_{\mathrm{old}}$, an unclipped negative policy-surrogate
gradient has the form $\widehat{\bm g}_\beta=(G_\beta/D)\bm Y_\beta$,
where $\bm Y_\beta=A_\beta\bm J^\top\bm\xi$ and
$\bm J=\partial\bm v/\partial\theta$.
For nonzero variance, the explicit factor cancels from
$\|\mathbb E\widehat{\bm g}_\beta\|_2/
\sqrt{\operatorname{tr}\operatorname{Cov}(\widehat{\bm g}_\beta)}$.
The action--advantage distribution nevertheless depends on the policy,
so this cancellation establishes neither equality nor an ordering of
signal-to-noise ratios across coefficient choices.
For example, in one dimension with $x=\mu+\alpha\xi$, reward $R(x)=x$,
and its exact mean as baseline, $A=\alpha\xi$ and the reward-gradient
estimator is $\kappa\xi^2$; its signal-to-noise ratio is $1/\sqrt2$
for every $\alpha>0$. Thus increasing noise need not increase estimator
signal-to-noise ratio.

\paragraph{The global matching multiplier.}
Writing $P=\mathbb E[\bar\ell^{\mathrm{clip}}/G^\star]$ and
$K=\mathbb E[\overline D_{\mathrm{KL}}/(G^\star)^2]$ gives
$\mathcal L(c,\lambda)=cP+\lambda c^2K$ and
$\mathcal L(ac,\lambda/a)=a\mathcal L(c,\lambda)$ for $a>0$.
Thus $c$ affects the relative reference penalty and an overall loss scale.
Plain SGD offsets the latter by dividing the learning rate by $a$.
AdamW instead cancels uniform gradient scaling at the same learning rate
in the zero-$\epsilon$, unclipped limit with zero-initialized moments.
Finite optimizer $\epsilon$ and a fixed gradient-clipping threshold can
prevent exact equivalence. The multiplier does not change the sampling
policy or its conditional KL cost for a prescribed velocity update.

\input{sec/experimental_details}
\input{sec/x_appendix}

\end{document}

%% file: math_commands.tex
\usepackage{amsmath,amsfonts,bm}

\def\eqref#1{equation~\ref{#1}}

\def\1{\bm{1}}

\DeclareMathAlphabet{\mathsfit}{\encodingdefault}{\sfdefault}{m}{sl}
\SetMathAlphabet{\mathsfit}{bold}{\encodingdefault}{\sfdefault}{bx}{n}



%% file: sec/4_exp.tex
\section{Experiments}
We evaluate four-step image and video generation with Geo-FSRL, Flow-GRPO \citep{liu2025flow}, Flow-CPS \citep{wang2025coefficients}, and a four-step DiffusionNFT adaptation \citep{zheng2026diffusionnft}. GRPO-based comparisons share single-stochastic-transition rollouts. Flow-CPS and Geo-FSRL use gradient-scale matching with weights from their respective coefficients unless ablated. Both tasks use 4 sampling steps in training and evaluation. For each task, all reward curves and tabulated automatic scores use the same fixed validation set of 256 prompts, disjoint from training prompts. The horizontal axes count optimizer updates. Training uses 32 NVIDIA H100 GPUs; hyperparameters, schedules, and reward normalization appear in Appendix~\ref{app:experimental_details}.

\subsection{Few-Step Text-to-Image}
\textbf{Experimental Setup}. We fine-tune all diffusion-transformer parameters of Qwen-Image-2512-Lightning-4steps\footnote{Distilled checkpoint: \url{https://huggingface.co/lightx2v/Qwen-Image-Lightning}.} \citep{wu2025qwen, lightx2v}, distilled from the 20B MMDiT Qwen-Image, at 720 $\times$ 720 resolution.
\par
\textbf{Reward Models}. We optimize HPSv3 \citep{ma2025hpsv3} for human preference, alone or jointly with CLIPScore \citep{radford2021learning} for image-text alignment.
\par
\textbf{Dataset}. Prompts come from the text-to-image generation split of OpenGPT-4o-Image \citep{chen2025opengpt}, a diverse corpus of about 40k prompts.
\par
\textbf{Single-reward Training Results}. We compare Geo-FSRL with Flow-CPS at normalized stochasticity $\phi\in\{0.3,0.5,0.7,1.0\}$, using each policy's matched weights. Geo-FSRL derives coefficients from each timestep pair. It improves similarly to the best tested Flow-CPS setting, $\phi=0.7$, in early and middle training (Figure~\ref{fig:exp1}(a)), but finishes higher: 14.05 versus 13.90; Flow-GRPO reaches 13.62 (Table~\ref{tab:exp1}). Maximum injected noise, $\phi=1$, is included and differs from the geometry optimum at internal timesteps. The tested DiffusionNFT adaptation exhibits early reward deterioration and visible pixel noise; this observation is specific to that adaptation.
\par
A global Flow-CPS $\phi$ specifies one geometry factor per stochastic
transition. Changing $\phi$ can improve the factor at one transition
while worsening it at another, whereas the closed-form schedule
minimizes each separately. The scan therefore compares globally fixed
noise splits with a transition-wise rule, rather than testing a monotonic
relation between any single geometry factor and final reward.
\par
\textbf{Multi-reward Joint Training Results}. Joint HPSv3 and CLIPScore training retains the best single-reward Flow-CPS setting, $\phi=0.7$. Geo-FSRL leads both final metrics (Figure~\ref{fig:exp1}(b,c), Table~\ref{tab:exp1}): 13.80 HPSv3 versus Flow-CPS's 13.58, and 1.008 CLIPScore versus Flow-GRPO's 0.996, absolute gains of 0.22 and 0.012 over the respective strongest baselines. Qualitative examples appear in Appendix~\ref{sec:appvis}.
\par
\textbf{Human evaluation.} Five evaluators blinded to method identity compare Geo-FSRL and Flow-CPS on 20 image pairs from randomly sampled prompts, with shared prompts and seeds and shuffled presentation order. Geo-FSRL receives 54\% of Visual Quality votes (aesthetics) and 58\% of Overall Preference votes (prompt adherence and visual appeal; Table~\ref{tab:exphuman}). These percentages are descriptive results from a small evaluation.

\begin{figure}[!htb]
\centering
\setlength{\abovecaptionskip}{5pt}
\includegraphics[width=\linewidth]{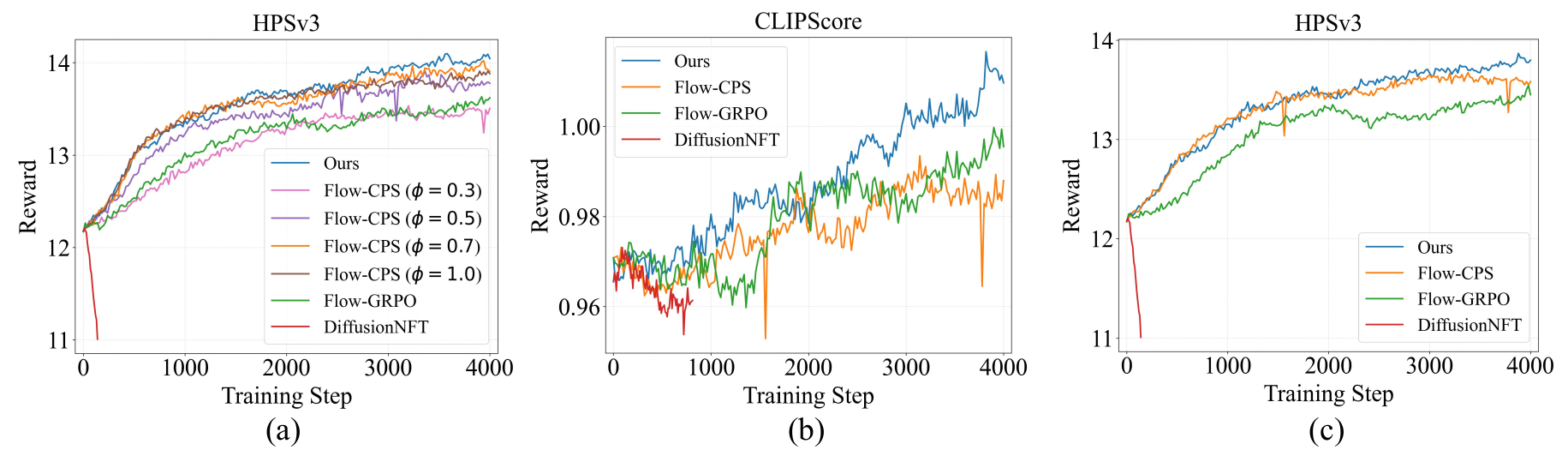}
\caption{Four-step image validation rewards versus optimizer updates: (a) HPSv3-only; (b,c) joint HPSv3 and CLIPScore. Flow-CPS curves use matched weights for their respective $\phi$ values. DiffusionNFT denotes the tested four-step adaptation.}
\label{fig:exp1}
\end{figure}

\begin{table}[!htb]
    \centering
    \setlength{\abovecaptionskip}{3pt}
    \setlength{\belowcaptionskip}{3pt}
    \caption{Final validation reward scores for single-reward setting and multi-reward setting text-to-image training.}
    \label{tab:exp1}
    \begin{tabular}{lccc}
        \toprule
        & \multicolumn{1}{c}{Single-Reward}
        & \multicolumn{2}{c}{Multi-Reward} \\
        \cmidrule(lr){2-2}
        \cmidrule(lr){3-4}
        Method
        & HPSv3
        & CLIPScore
        & HPSv3 \\
        \midrule
        Qwen-Image-Lightning & 12.17 & 0.971 & 12.17 \\
        Flow-GRPO \citep{liu2025flow} & 13.62 {\scriptsize\textcolor{gray}{(+1.45)}} & 0.996 {\scriptsize\textcolor{gray}{(+0.025)}} & 13.45 {\scriptsize\textcolor{gray}{(+1.28)}} \\
        Flow-CPS \citep{wang2025coefficients} & 13.90 {\scriptsize\textcolor{gray}{(+1.73)}} & 0.988 {\scriptsize\textcolor{gray}{(+0.017)}} & 13.58 {\scriptsize\textcolor{gray}{(+1.41)}} \\
        Geo-FSRL (Ours) & \textbf{14.05} {\scriptsize\textcolor{gray}{(+1.88)}} & \textbf{1.008} {\scriptsize\textcolor{gray}{(+0.037)}} & \textbf{13.80} {\scriptsize\textcolor{gray}{(+1.63)}} \\
        \bottomrule
    \end{tabular}
\end{table}
\vspace{1em}

\begin{table}[!tb]
    \centering
    \setlength{\abovecaptionskip}{3pt}
    \setlength{\belowcaptionskip}{3pt}
    \caption{Human evaluation of 20 image pairs by five evaluators blinded to method identity. We report the percentage of preference votes for each method.}
    \label{tab:exphuman}
    \begin{tabular}{lcc}
        \toprule
        & Visual Quality (\%)
        & Overall Preference (\%) \\
        \midrule
        Flow-CPS & 46 & 42 \\
        Geo-FSRL (Ours) & \textbf{54} & \textbf{58}  \\
        \bottomrule
    \end{tabular}
\end{table}

\textbf{Policy Reweighting Ablation.}
We vary policy weighting while retaining matched KL weighting for both Flow-CPS and our Geo-FSRL, ensuring consistent KL regularization strength across transitions. As shown in Table~\ref{tab:policy_reweight_ablation}, policy
weighting improves Flow-CPS by 0.21 and Geo-FSRL by 0.44. Geo-FSRL scores
0.08 below Flow-CPS without policy weighting and 0.15 above with their
respective matched weights. The observed ordering therefore changes with
policy weighting. These results support the combined design in this
setting, evaluating coefficient selection together with its corresponding
weights. Appendix~\ref{sec:ablations} provides the validation curves and
a controlled gradient-scale analysis.

\begin{table}[!tb]
    \centering
    \setlength{\abovecaptionskip}{3pt}
    \setlength{\belowcaptionskip}{3pt}
    \caption{Single-reward policy-weighting ablation on the validation set. Each method uses its own matched weights; KL weighting remains enabled throughout. Gain is the change from enabling policy weighting.}
    \label{tab:policy_reweight_ablation}
    \begin{tabular}{lccc}
        \toprule
        Method
        & \shortstack{Without policy\\reweighting}
        & \shortstack{With policy\\reweighting}
        & Gain \\
        \midrule
        Flow-CPS & \textbf{13.69} & 13.90 & +0.21 \\
        Geo-FSRL (Ours) & 13.61 & \textbf{14.05} & \textbf{+0.44} \\
        \bottomrule
    \end{tabular}
\end{table}

\subsection{Few-Step Text-to-Video}
\textbf{Experimental Setup}. We distill Wan2.1-T2V-1.3B \citep{wan2025wan} to four steps using DMD2 with the LightX2V training framework\footnote{\url{https://github.com/ModelTC/LightX2V/tree/main/lightx2v_train}} \citep{lightx2v}. We then fine-tune all diffusion-transformer parameters, generating 81 frames at 832 $\times$ 480 resolution.
\par
\textbf{Reward Models}. We jointly optimize HPSv3-General and HPSv3-Percentile for frame-level visual quality and caption alignment, and VideoAlign-MQ for motion quality \citep{liu2026improving,meituanlongcatteam2025longcatvideotechnicalreport}; Appendix~\ref{app:experimental_details} specifies aggregation.
\par
\textbf{Dataset}. Following Self-Forcing \citep{huang2026self}, we use prompts from LLM-extended VidProM \citep{wang2024vidprom}.
\par
\textbf{Results}. Geo-FSRL leads all three final metrics (Figure~\ref{fig:exp3}, Table~\ref{tab:exp2}), gaining 1.09 HPSv3-General, 0.80 HPSv3-Percentile, and 0.81 VideoAlign-MQ over the respective strongest baselines. Validation curves show a clearer advantage than for single-reward images. These results evaluate the complete method on a second modality; component comparisons remain in the image ablation. Selected video frames appear in Appendix~\ref{sec:appvis}.

\begin{figure}[!htb]
\centering
\setlength{\abovecaptionskip}{5pt}
\includegraphics[width=\linewidth]{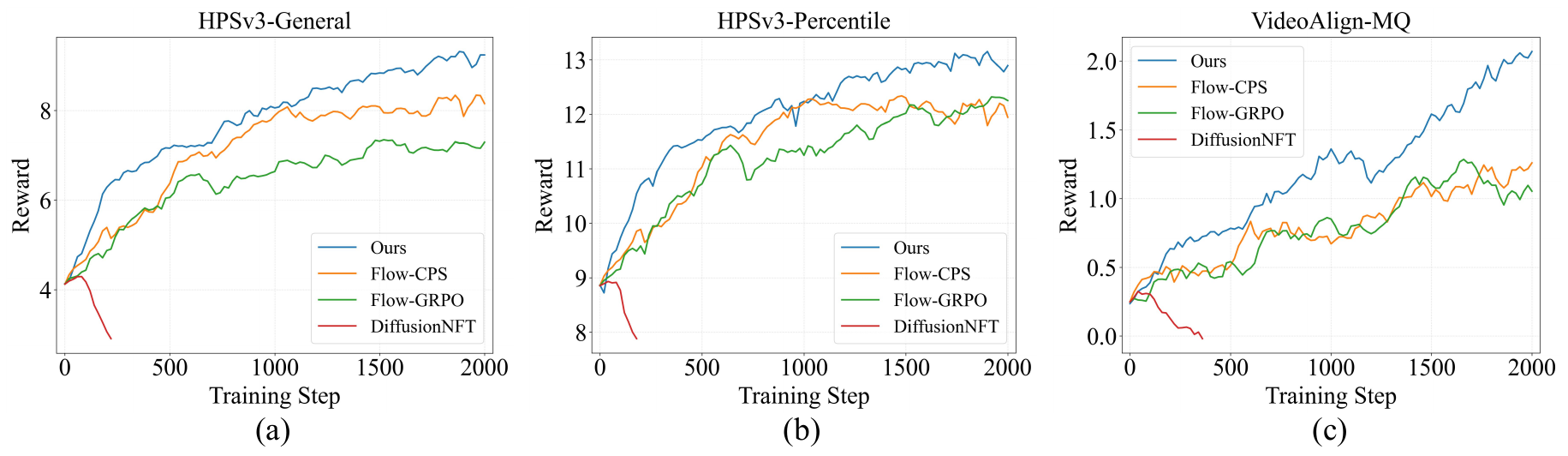}
\caption{Four-step video validation rewards versus optimizer updates under joint optimization. Flow-CPS uses matched weights; DiffusionNFT denotes the tested four-step adaptation.}
\label{fig:exp3}
\end{figure}

\begin{table}[!htb]
    \centering
    \setlength{\abovecaptionskip}{3pt}
    \setlength{\belowcaptionskip}{3pt}
    \caption{Final validation reward scores for joint text-to-video training.}
    \label{tab:exp2}
    \begin{tabular}{lccc}
        \toprule
        Method
        & HPSv3-General
        & HPSv3-Percentile
        & VideoAlign-MQ \\
        \midrule
        Wan2.1-T2V-1.3B & 4.13 & 8.85 & 0.25 \\
        Flow-GRPO \citep{liu2025flow} & 7.29 {\scriptsize\textcolor{gray}{(+3.16)}} & 12.10 {\scriptsize\textcolor{gray}{(+3.25)}} & 1.05 {\scriptsize\textcolor{gray}{(+0.80)}} \\
        Flow-CPS \citep{wang2025coefficients} & 8.15 {\scriptsize\textcolor{gray}{(+4.02)}} & 11.94 {\scriptsize\textcolor{gray}{(+3.09)}} & 1.26 {\scriptsize\textcolor{gray}{(+1.01)}} \\
        Geo-FSRL (Ours) & \textbf{9.24} {\scriptsize\textcolor{gray}{(+5.11)}} & \textbf{12.90} {\scriptsize\textcolor{gray}{(+4.05)}} & \textbf{2.07} {\scriptsize\textcolor{gray}{(+1.82)}} \\
        \bottomrule
    \end{tabular}
\end{table}

%% file: sec/5_related.tex
\section{Related Work}
\textbf{Policy optimization for fast generation.}
DDPO and DPOK use policy gradients for stochastic denoising \citep{black2024training, fan2023dpok}.
Their reference implementations use DDIM ($\eta=1$) \citep{song2021denoising}, yielding ancestral posterior-variance transitions on the timestep grid; their formulations do not require this choice.\footnote{Default settings: \href{https://github.com/jannerm/ddpo/blob/main/config/base.py}{DDPO} and \href{https://github.com/google-research/google-research/blob/master/dpok/pipeline_stable_diffusion_extended.py}{DPOK}.}
RLCM applies RL to few-step consistency models with Gaussian re-noising of endpoint predictions \citep{oertell2024rlcm}; our setting retains the velocity parameterization of distilled flows.

\textbf{Stochastic policies for flow RL.}
Flow-GRPO and DanceGRPO apply group-relative optimization to stochastic flow trajectories \citep{liu2025flow, xue2025dancegrpo}.
Flow-CPS demonstrates four-step training and evaluation with coefficient-preserving transitions \citep{wang2025coefficients}.
Within this family, our conditional KL criterion (Section~\ref{sec:kl_selection}) selects the existing first-order SDE-DPM-Solver++ transition \citep{lu2022dpmsolverplusplus}.
Our policy-gradient and KL weights specialize LongCat-Video's gradient-scale matching principle \citep{meituanlongcatteam2025longcatvideotechnicalreport}.

\textbf{Allocation of stochasticity and computation.}
MixGRPO confines stochastic sampling and policy updates to a moving SDE window \citep{li2025mixgrpo}; TempFlow-GRPO combines designated stochastic branching, noise-aware weighting, and seed grouping \citep{he2026tempflow}; BranchGRPO restructures rollouts \citep{li2026branchgrpo}.
Our rollout shares this mixed stochastic--deterministic structure; we address coefficient selection within a stochastic transition.

%% file: sec/6_conclusion.tex
\section{Conclusion}
In this paper, we introduce Geo-FSRL, a closed-form stochastic policy selection rule with corresponding loss reweighting for few-step flow reinforcement learning. Within a coefficient-preserving isotropic Gaussian family, the selected policy minimizes the conditional KL divergence induced by any prescribed nonzero velocity-output update, with the same minimizing coefficients across update magnitudes and directions. This separates policy selection from scale matching, which normalizes explicit loss scales without changing likelihood ratios for a prescribed output update. In both four-step text-to-image and text-to-video experiments, Geo-FSRL achieves faster reward improvement and higher final validation scores than the compared baselines, while retaining the original deterministic solver at inference. These results support treating stochastic policy construction as an explicit design choice for few-step reinforcement learning and motivate further study of its interaction with reward objectives and optimization.

\section*{Acknowledgements}
This work was supported by the National Natural Science Foundation of China (Nos. 62525601, 62476018), and the Postdoctoral Fellowship Program of CPSF (No. BX20250487).

\section*{AI use statement}

In this work, we used generative AI tools solely for language polishing and literature finding to improve presentation clarity and help retrieve related works. We have not used generative AI tools for research ideation, methodological and experimental design, or result analysis. We take responsibility for the final content of this work.

%% file: sec/experimental_details.tex
\section{Experimental Details}
\label{app:experimental_details}

Table~\ref{tab:training_hyperparameters} lists the training hyperparameters
for the GRPO-based experiments.

\begin{table}[ht]
    \centering
    \caption{Training hyperparameters for the image and video GRPO-based
    experiments. The geometry multiplier $c_{\mathrm{geo}}$ applies to the matched weights of Geo-FSRL and Flow-CPS.}
    \label{tab:training_hyperparameters}
    \small
    \begin{tabular}{lcc}
        \toprule
        Setting & Text-to-image & Text-to-video \\
        \midrule
        Optimizer updates & 4,000 & 2,000 \\
        Optimizer & AdamW & AdamW \\
        Learning rate & $2\times10^{-6}$ & $2\times10^{-6}$ \\
        Adam $(\beta_1,\beta_2)$ & $(0.9,0.999)$ & $(0.9,0.999)$ \\
        Adam $\epsilon$ & $10^{-8}$ & $10^{-8}$ \\
        Weight decay & $10^{-4}$ & $10^{-4}$ \\
        Maximum gradient norm & $0.5$ & $0.5$ \\
        Ratio clip range for $\bar r$ & $10^{-3}$ & $10^{-3}$ \\
        Reference-penalty coefficient & $3\times10^{-4}$ & $3\times10^{-4}$ \\
        Geometry multiplier $c_{\mathrm{geo}}$ & $0.5$ & $0.5$ \\
        Samples per prompt & 8 & 8 \\
        Rollout batch size per device & 8 & 4 \\
        Rollout batches per iteration & 2 & 2 \\
        Training batch size per device & 8 & 4 \\
        Optimization passes per rollout iteration & 1 & 1 \\
        Gradient accumulation steps & 1 & 1 \\
        Advantage clipping & $[-5,5]$ & $[-5,5]$ \\
        Random seed & 42 & 42 \\
        \bottomrule
    \end{tabular}
\end{table}

Flow-GRPO uses noise level $0.7$ for main image comparisons and $0.3$ for video, and caps the injected transition standard
deviation at $0.45$ (Appendix~\ref{sec:app1}). Flow-CPS uses
$\phi=0.7$ for the main image comparisons and $\phi=0.3$ for video;
the image single-reward scan also tests $0.3$, $0.5$, and $1.0$.
Flow-CPS and Geo-FSRL both use $c_{\mathrm{geo}}=0.5$ in their matched
weights. The policy-only ablation sets $w^{\mathrm{policy}}=1$ and
retains each method's $w^{\mathrm{KL}}$.

For each reward, we subtract the prompt-group mean and divide by the
largest prompt-group standard deviation in the collected rollout batch,
with a stabilizer of $10^{-4}$. In multi-reward training, the resulting
reward-specific advantages are summed with equal weights and clipped
to $[-5,5]$.

\paragraph{Reported rewards.}
For each task, the curves and tabulated automatic reward scores are
computed on the same fixed set of 256 validation prompts, shared across
methods and held out from training. Curve horizontal axes count optimizer
updates.
The image CLIP score uses CLIP ViT-L/14 matching logits divided by 30;
it is a scaled similarity and need not lie in $[0,1]$.
For video, HPSv3-General averages frame scores using the generic prompt
``A high-quality image.'' HPSv3-Percentile uses the video caption and
averages the top 30\% of frame scores. For VideoAlign-MQ, we increase the
number of sampled frames per second to improve stability, similar to previous work \citep{xue2025dancegrpo}. the  All reported automatic metrics
are also optimized during training; the human comparison reports descriptive preferences, with shared images and evaluators across ratings.

\subsection{Timestep Schedule and Coefficients}
\label{app:time_schedule}
Wan uses $t=5u/(1+4u)$ for
$u\in\{1,3/4,1/2,1/4,0\}$, giving
$(1,0.9375,0.833333\ldots,0.625,0)$ for both training and evaluation.
Qwen uses four steps from the scheduler saved with its Lightning
checkpoint. The last transition to $s=0$ is deterministic and is excluded
from policy optimization. Table~\ref{tab:wan_step_geometry} compares the coefficients on the Wan grid.

Video comparisons use $\phi=0.3$ for greater geometric separation from
the optimum, whose $\phi^\star$ values are approximately
$(1,0.783653,0.783653)$. With $\phi=0.7$, $G$ is only $1.03\%$ above
$G^\star$ at the two interior-source transitions and $12.23\%$ above
at the first. This analytic comparison does not establish a reward
ordering between Flow-CPS settings.

\begin{table}[ht]
    \centering
    \small
    \setlength{\tabcolsep}{3pt}
    \setlength{\abovecaptionskip}{3pt}
    \setlength{\belowcaptionskip}{3pt}
    \caption{Analytic Wan coefficients (five decimal places).
    Flow-CPS with $\phi=0.3$ is the reported video baseline;
    $\phi=0.7$ is shown for analytic comparison.}
    \label{tab:wan_step_geometry}
    \begin{tabular}{lrrrrrrrrr}
        \toprule
        & \multicolumn{3}{c}{Selected split ($\phi^\star$)}
        & \multicolumn{3}{c}{Flow-CPS ($\phi=0.3$)}
        & \multicolumn{3}{c}{Flow-CPS ($\phi=0.7$)} \\
        \cmidrule(lr){2-4}\cmidrule(lr){5-7}\cmidrule(lr){8-10}
        $(t,s)$ & $\alpha^\star$ & $\beta^\star$ & $G^\star$
            & $\alpha$ & $\beta$ & $G$
            & $\alpha$ & $\beta$ & $G$ \\
        \midrule
        $(1,15/16)$ & 0.93750 & 0.00000 & 0.06667 & 0.42562 & 0.83532 & 0.14685 & 0.83532 & 0.42562 & 0.07482 \\
        $(15/16,5/6)$ & 0.78567 & 0.27778 & 0.17678 & 0.37833 & 0.74251 & 0.29034 & 0.74251 & 0.37833 & 0.17859 \\
        $(5/6,5/8)$ & 0.58926 & 0.20833 & 0.47140 & 0.28374 & 0.55688 & 0.77424 & 0.55688 & 0.28374 & 0.47624 \\
        \bottomrule
    \end{tabular}
\end{table}

\par
\begin{samepage}
\subsection{Training Algorithm}
\label{app:training_algorithm}

\paragraph{Sampling arrangement.}
For each rollout batch, we select one non-terminal transition index
$k\in\{0,\ldots,K-2\}$ and use the stochastic policy only at that step.
All other steps use deterministic Euler updates with the frozen sampling
model $\theta_{\mathrm{old}}$. Only the selected transition contributes
to the loss, with its trajectory's terminal reward determining the
advantage; no gradient is propagated through the rollout.
The index cycles across rollout batches, and the recorded transitions
are shuffled before optimization. The expectation in
Eq.~\ref{eq:geometry_matched_grpo_objective} is over these rollout
batches, prompts, and group trajectories.
At inference, the updated velocity model uses the original deterministic
few-step solver.

\begin{algorithm}[H]
\caption{Geo-FSRL Training}
\label{alg:geometry_grpo}
\small
\begin{algorithmic}[1]
\Require Velocity model $\theta$; reference $\theta_{\mathrm{ref}}$;
schedule $\mathcal T$; reward model $\mathcal R$; group size $M$;
scale $c_{\mathrm{geo}}$
\State Precompute optimal coefficients and weights for each non-terminal
transition using Eqs.~\ref{eq:optimal_transition_coefficients}
and~\ref{eq:geometry_matched_weights}
\For{each GRPO iteration}
    \State Freeze $\theta_{\mathrm{old}}\gets\theta$ and initialize
    $\mathcal B\gets\emptyset$
    \For{each rollout batch}
        \State Select $k$ by cycling over $\{0,\ldots,K-2\}$
        \State For each prompt, generate $M$ trajectories with
        $\theta_{\mathrm{old}}$: sample stochastically at $k$ and use
        deterministic Euler updates elsewhere
        \State Evaluate terminal rewards and construct group-relative
        advantages $A$
        \State Store $(\bm c,\bm x_{t_k},\bm x_{t_{k+1}},k,
        \log q_{\mathrm{old}}/D,A)$ for each trajectory in $\mathcal B$
    \EndFor
    \For{each optimization minibatch from shuffled $\mathcal B$}
        \State Evaluate the current log-density and current/reference
        policy means at the stored states
        \State Form $\bar r$, the dimension-averaged Gaussian reference
        KL, and the weights for the recorded timestep pairs
        \State Update $\theta$ by minimizing
        Eq.~\ref{eq:geometry_matched_grpo_objective}
    \EndFor
\EndFor
\Ensure Optimized velocity model $\theta$
\end{algorithmic}
\end{algorithm}
\end{samepage}

%% file: sec/x_appendix.tex
\section{Additional Ablation Results}
\label{sec:ablations}
Figure~\ref{fig:ablation} gives the validation curves for
Table~\ref{tab:policy_reweight_ablation}, with KL weighting retained
throughout. The Geo-FSRL--Flow-CPS score difference changes from $-0.08$
without policy weighting to $+0.15$ with matched weighting, an observed
interaction contrast of $0.23$. The matched comparison changes both the
coefficients and their corresponding numerical weights. It therefore
evaluates their combination rather than isolating coefficient changes
under fixed numerical weights; no cross-seed inference is made from this
single-run contrast.

The coefficient scan compares globally fixed splits with the
transition-wise rule, as discussed in Section~\ref{sec:method} and the
image experiments. The reported reward curves do not measure clipping
frequency or log-ratio variance, so they do not establish which mechanism
mediates the observed gains.

\paragraph{Conditional gradient-scale comparison.}
Fix the conditioning state and timestep pair $(t,s)$, and compare the two policies at $\theta=\theta_{\mathrm{old}}$, where the ratio is one and clipping is inactive. Let $\bm J=\partial\bm v_\theta/\partial\theta$ be the velocity-model Jacobian. For method $m\in\{\mathrm{CPS},\mathrm{Geo}\}$, the gradient of the negative, dimension-averaged policy surrogate without policy reweighting is
\begin{equation}
    \bm g_{\mathrm{policy}}^{(m)}
    =-\frac{A}{D}\frac{\kappa_{t,s}^{(m)}}{\alpha_{t,s}^{(m)}}
       \bm J^\top\bm\xi
    =\frac{A G_{t,s}^{(m)}}{D}\bm J^\top\bm\xi,
    \label{eq:app_ablation_policy_gradient}
\end{equation}
where the action and advantage are held fixed when differentiating, $\bm\xi$ is the standardized residual under the old policy, and the last equality uses $\kappa_{t,s}^{(m)}<0$. Holding $A$, $\bm\xi$, $\bm J$, and $D$ fixed across the two policies therefore gives
\begin{equation}
    \bm g_{\mathrm{policy}}^{(\mathrm{CPS})}
    =\frac{G_{t,s}^{\mathrm{CPS}}}{G_{t,s}^{\star}}
      \bm g_{\mathrm{policy}}^{(\mathrm{Geo})}.
    \label{eq:app_ablation_gradient_comparison}
\end{equation}
Since $G_{t,s}^{\mathrm{CPS}}\geq G_{t,s}^{\star}$, the Flow-CPS gradient norm is no smaller than that of Geo-FSRL in this controlled comparison; the inequality is strict when the geometry factors differ and the common gradient is nonzero. This statement concerns a fixed transition. Matching training hyperparameters alone does not hold the sampled advantages, residuals, or model Jacobians fixed, and the statement does not order the norm of a batch gradient mixing different transitions.

\paragraph{Relation to the reward ordering.}
For either policy, applying its matched KL weight gives
\begin{equation}
    w_{t,s}^{\mathrm{KL}}\overline D_{\mathrm{KL}}
    =\frac{c_{\mathrm{geo}}^2}{2D}
      \|\bm v_\theta-\bm v_{\mathrm{ref}}\|_2^2.
    \label{eq:app_ablation_matched_kl}
\end{equation}
Thus, retaining KL reweighting while removing policy reweighting normalizes the reference-penalty curvature in velocity space but leaves $G_{t,s}$ in the policy gradient. With the other gradient factors controlled, Flow-CPS's larger geometry factor increases the policy-gradient scale relative to this matched regularizer. This is a conditional scale explanation compatible with its slightly higher unweighted reward, rather than a measured attribution of that difference. Appendix~\ref{app:geometry_matched_scaling} details how full updates depend on the coefficients and optimizer. The ablation evaluates cross-timestep balancing together with changes in the overall policy-to-regularization scale.

\setlength{\intextsep}{4pt plus 1pt minus 1pt}
\begin{figure}[!htbp]
\centering
\setlength{\parskip}{0pt}
\setlength{\abovecaptionskip}{3pt}
\includegraphics[width=\linewidth]{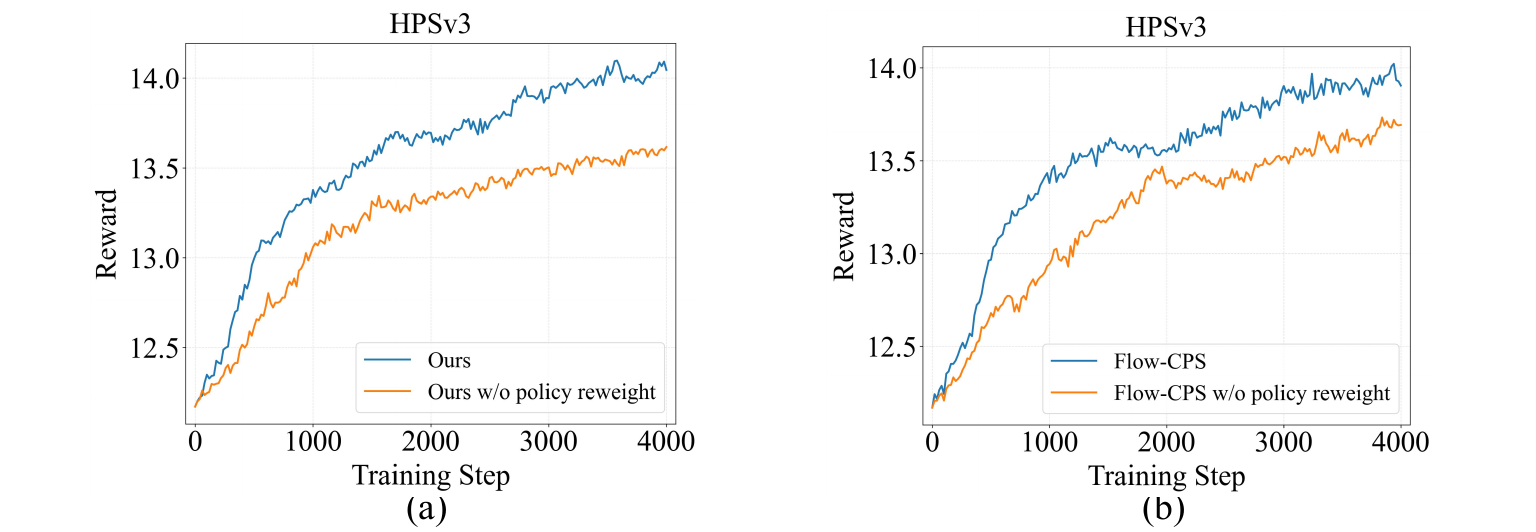}
\caption{Validation HPSv3 with and without policy reweighting for (a) Geo-FSRL and (b) Flow-CPS. All curves use the same 256 validation prompts; horizontal axes count optimizer updates. KL reweighting remains enabled throughout. Scores are reported in Table~\ref{tab:policy_reweight_ablation}.}
\label{fig:ablation}
\end{figure}

\section{Additional Qualitative Results}
\label{sec:appvis}
\begin{figure}[!htbp]
\centering
\setlength{\parskip}{0pt}
\setlength{\abovecaptionskip}{3pt}
\includegraphics[width=\linewidth]{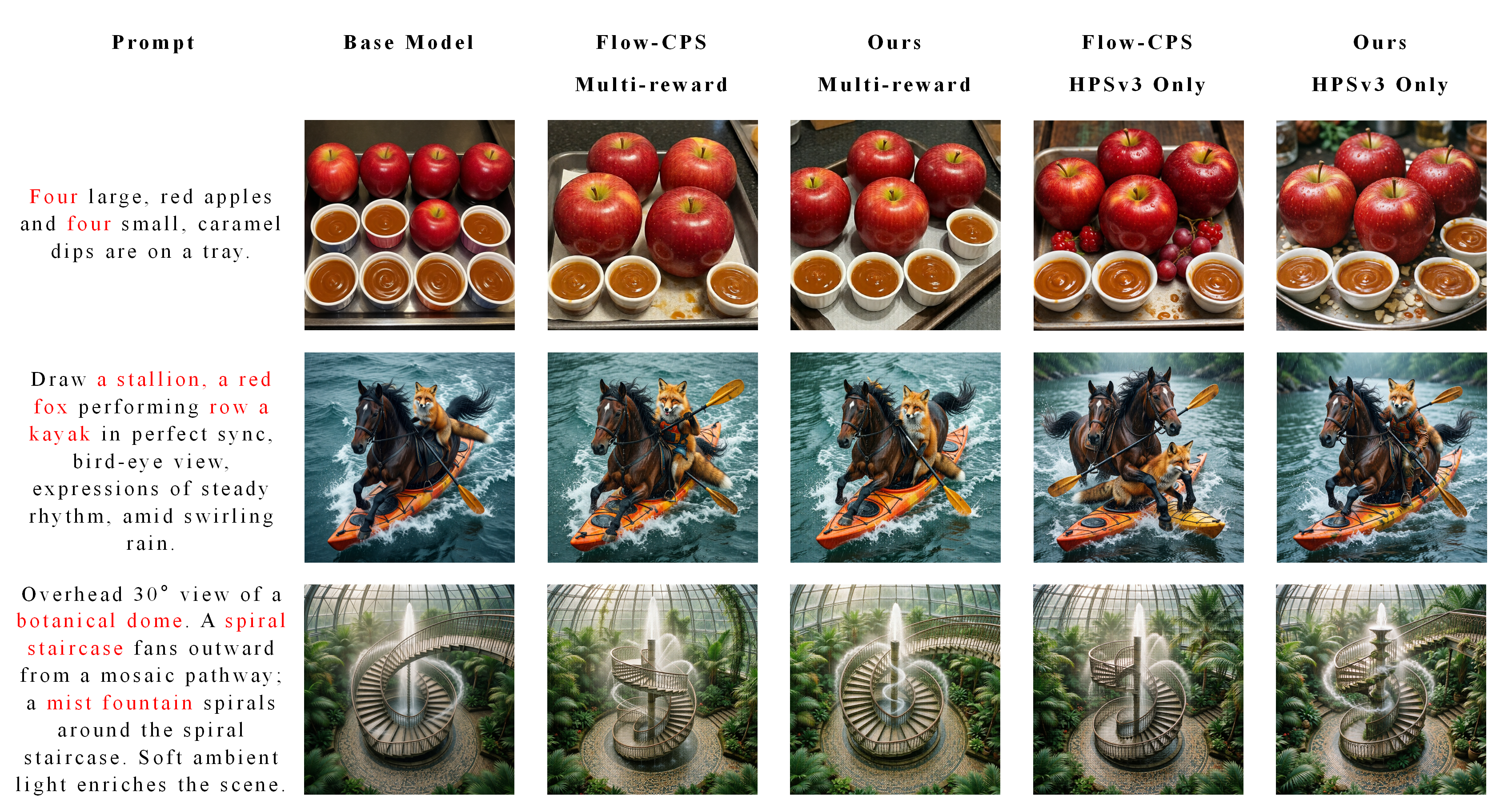}
\caption{Qwen-Image-Lightning, Flow-CPS, and Geo-FSRL on the displayed prompts. Multi-reward training optimizes HPSv3 and CLIPScore; HPSv3-only training optimizes HPSv3 alone.}
\label{fig:exp2}
\end{figure}

Figures~\ref{fig:exp4}--\ref{fig:exp5_02} compare the base models with
Geo-FSRL on additional image and video prompts.

\begin{figure}[ht]
\begin{center}
\includegraphics[width=\linewidth]{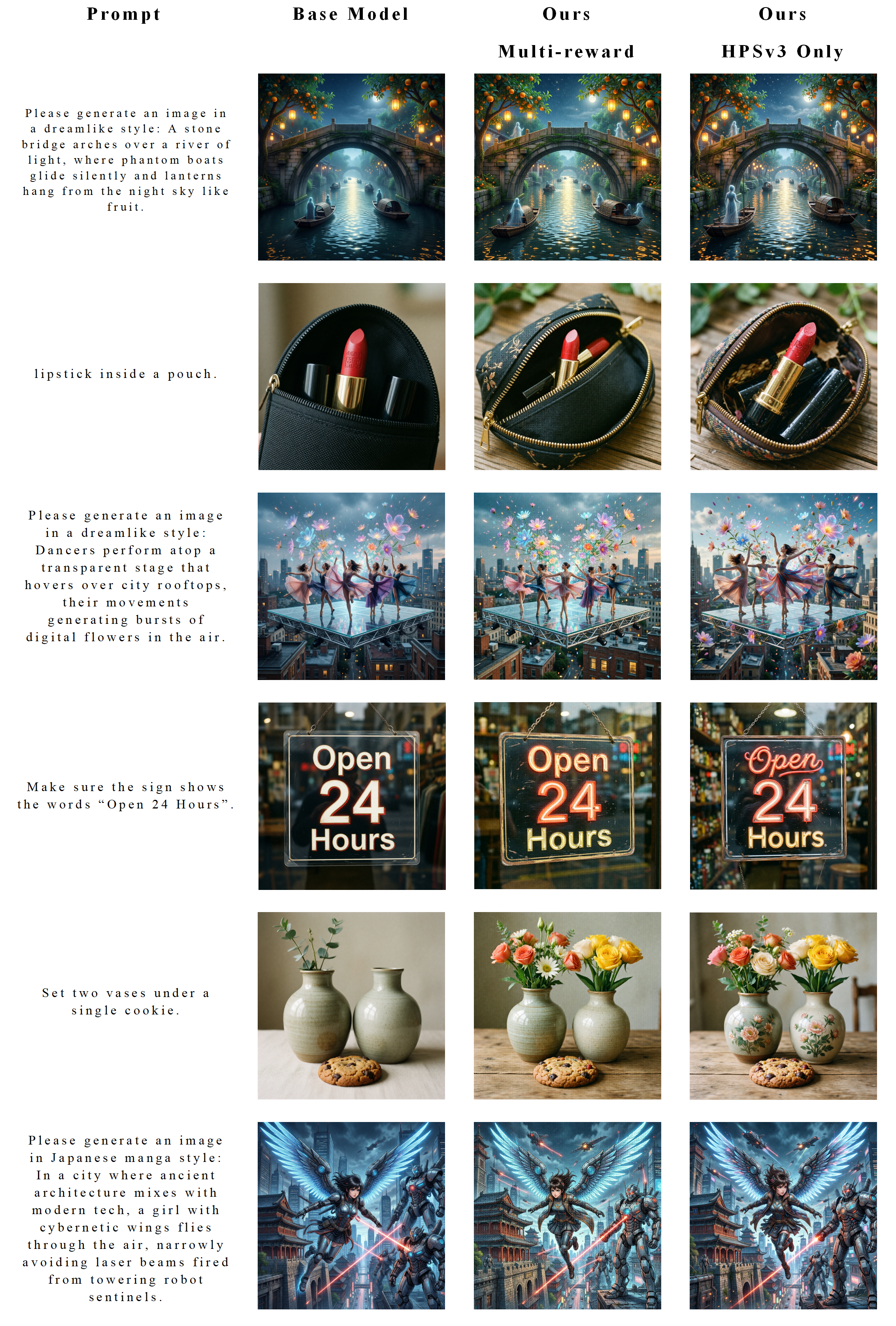}
\end{center}
\vspace{-1em}
\caption{Additional text-to-image examples from the Qwen-Image-Lightning base model and Geo-FSRL under joint HPSv3 and CLIPScore training or HPSv3-only training. Each row uses the displayed prompt.}
\label{fig:exp4}
\end{figure}

\begin{figure}[ht]
\begin{center}
\includegraphics[width=\linewidth]{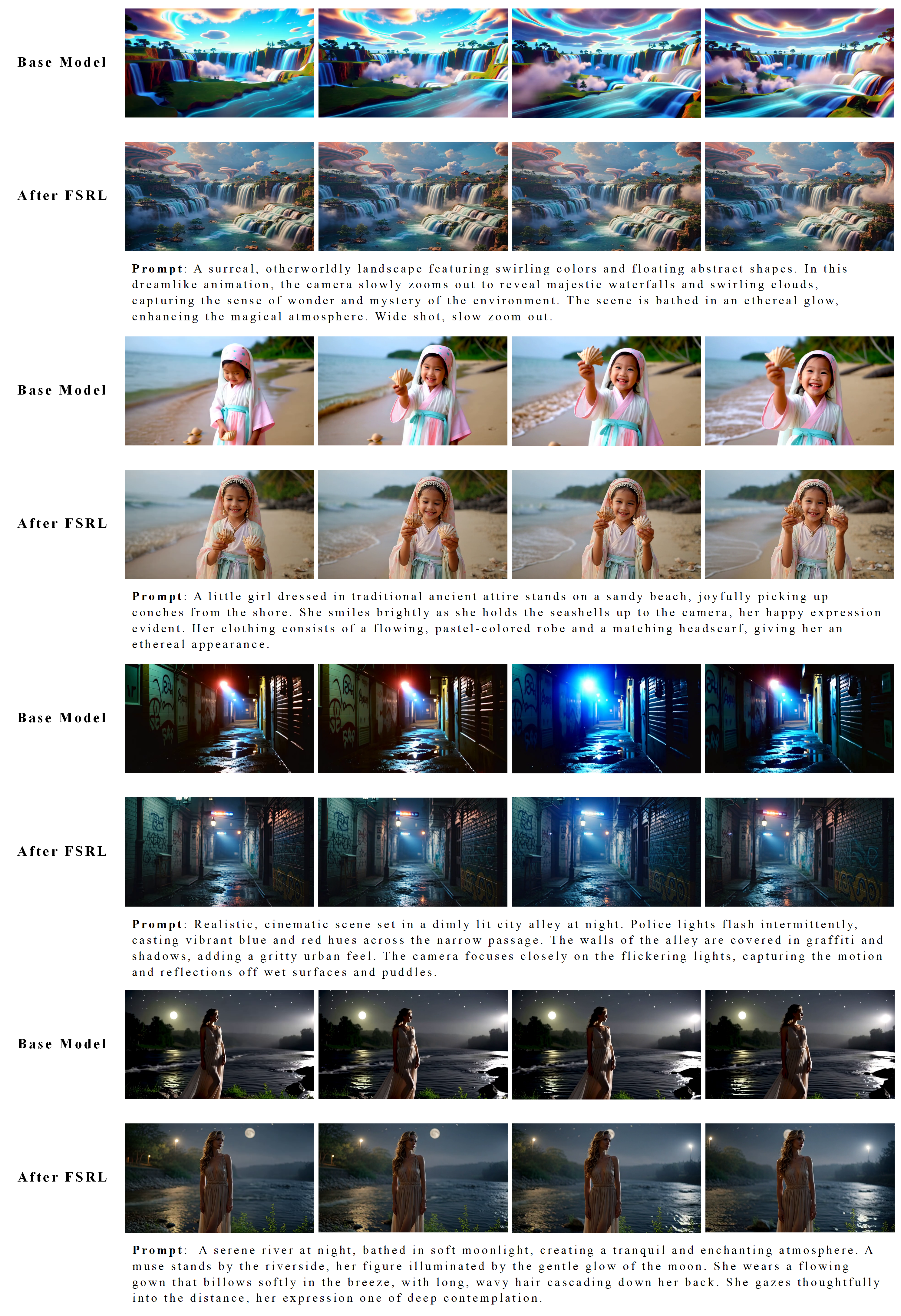}
\end{center}
\vspace{-1em}
\caption{Selected frames from videos generated by the four-step distilled Wan2.1 base model and Geo-FSRL under joint reward optimization. For each prompt, the upper row shows the base model and the lower row (``After FSRL'') shows Geo-FSRL, with four frames per video. Some prompts are shortened for display.}
\label{fig:exp5}
\end{figure}

\begin{figure}[ht]
\begin{center}
\includegraphics[width=\linewidth]{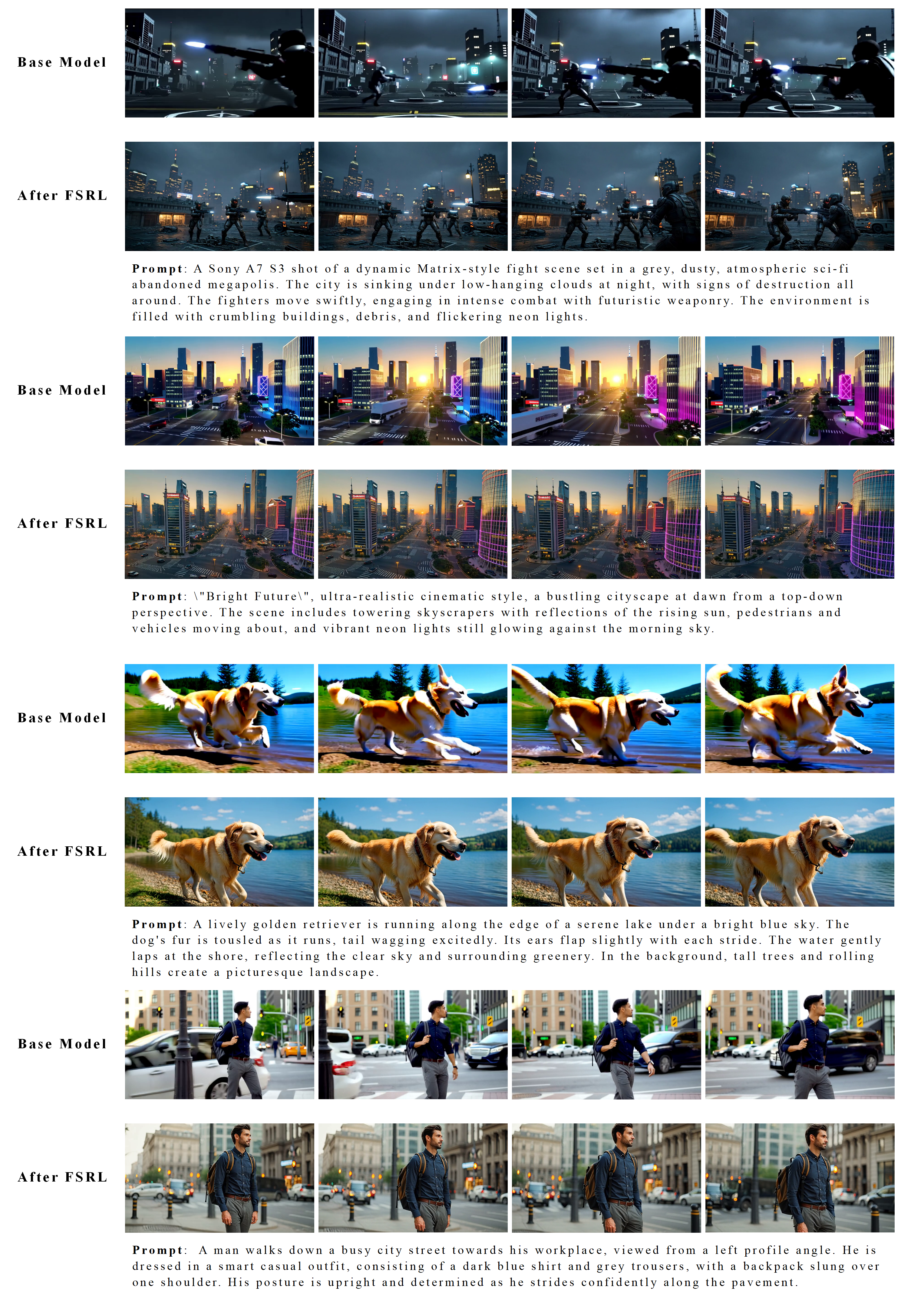}
\end{center}
\vspace{-1em}
\caption{Additional video comparisons under joint reward optimization. Each prompt has two rows of four selected frames: the four-step distilled Wan2.1 base model above and Geo-FSRL (``After FSRL'') below. Some prompts are shortened for display.}
\label{fig:exp5_02}
\end{figure}